\def \ie {\emph{i.e.}, }
\newtheorem{theorem}{Theorem}
\newtheorem{proof}{Proof}
\DeclareMathOperator*{\argmin}{arg\,min}
\DeclareMathOperator*{\argmax}{arg\,max}
\title{Scheduling Real-time Deep Learning Services as Imprecise Computations}
\author{
  \IEEEauthorblockN{
    Shuochao Yao\IEEEauthorrefmark{1},
    Yifan Hao\IEEEauthorrefmark{2},
    Yiran Zhao\IEEEauthorrefmark{3},
    Huajie Shao\IEEEauthorrefmark{1},
    Dongxin Liu\IEEEauthorrefmark{1},\\
    Shengzhong Liu\IEEEauthorrefmark{1},
    Tianshi Wang\IEEEauthorrefmark{1},
    Jinyang Li\IEEEauthorrefmark{1},
    Tarek Abdelzaher\IEEEauthorrefmark{1}}
  \IEEEauthorblockA{
    University of Illinois at Urbana-Champaign\IEEEauthorrefmark{1}, VMware\IEEEauthorrefmark{2} , Pinterest\IEEEauthorrefmark{3} 
  }
  \IEEEauthorblockA{
    Email: \{syao9, yifanh5, zhao97, hshao5, dongxin3, sl29, tianshi3, jinyang7, zaher\}@illinois.edu
  }
}
\begin{document}
\maketitle

\begin{abstract}
The paper presents an efficient real-time scheduling algorithm for {\em intelligent real-time edge services\/}, defined as those that perform machine intelligence tasks, such as voice recognition, LIDAR processing, or machine vision, on behalf of local embedded devices that are themselves unable to support extensive computations. The work contributes to a recent direction in real-time computing that develops scheduling algorithms for machine intelligence tasks with anytime predicition. We show that deep neural network workflows can be cast as imprecise computations, each with a mandatory part and (several) optional parts {\em whose execution utility depends on input data\/}. The goal of the real-time scheduler is to maximize average accuracy of deep neural network outputs, while meeting task deadlines, thanks to opportunistic shedding of the least necessary optional parts. The work is motivated by the proliferation of increasingly ubiquitous but resource-constrained embedded devices (for applications ranging from autonomous cars to the {\em Internet of Things\/}) and the desire to develop services that endow them with intelligence. Experiments on recent GPU hardware and a state of the art deep neural network for machine vision illustrate that our scheme can increase the overall accuracy by 10\% $\sim$ 20\%, while incurring (nearly) no deadline misses.
\end{abstract}

\section{Introduction}
\label{sec:introduction}
This paper develops a scheduling algorithm for deep neural network tasks that maximizes result accuracy while meeting task deadlines. The work observes that modern machine intelligence workflows (specifically, deep neural networks) can be formulated as imprecise computations~\cite{liu1994imprecise} with a mandatory part followed by multiple optional ones. The utility of executing the optional components is input-data-dependent. We implement deep learning workflows as imprecise computations that (i) estimate the improvement in result quality attainable from executing the optional parts, and (ii) shed unnecessary parts as needed to meet deadlines. The algorithm significantly improves the ability of the underlying platform to deliver accurate results, while meeting deadline constraints. 
 
This work is motivated by new applications such as autonomous cars~\cite{kim2013parallel}, drones~\cite{song2014towards}, and smart IoT objects~\cite{kopetz2011internet,stankovic2014research} that rely on increasingly sophisticated sensors to maintain awareness of their environment and perform various detection, recognition, and localization tasks. These tasks typically need a GPU to process the largely parallel sensor data. For example, modern cameras might offer around 1 million pixels per frame and LIDARs will generate up to 10 million 3D points per frame. A single embedded platform, such as an autononous car or a drone will typically have multiple such sensors. Endowing individual sensors with GPUs becomes expensive. A better architecture might offload the data from sensors to a local computationally more capable node over a sufficiently fast local communication fabric (e.g., today, a gigabit switch costs less than \$50). We call such a node, the {\em edge server\/}. Instances of such an architecture recently emerged~\cite{li2017fitcnn,cheng2018adaptive,abdelzaher:19} to support a variety of real-time applications~\cite{bechtel2018deeppicar,mikami2018deepcounter,huang2018case}. The recent NVIDIA AGX platform line-up is one example of today's GPU-enabled nodes designed to be the edge server in such an architecture (NVIDIA AGX is specifically marketed as the ``brain'' node supporting autonomous driving~\cite{yang2019re}). These applications motivate exploration of real-time support (e.g., scheduling policies) for AI tasks running on the edge server node.

Recent directions in real-time support for machine intelligence investigated real-time properties of leading AI platforms (such as the NVIDIA TX1, TX2, and Xavier)~\cite{otterness2017evaluation,yang2018avoiding,pujol2019generating}, discussed neural network approximations (via hyper-parameter tuning) to tune computation to platform capabilities~\cite{bateni2018apnet,yao2017deepiot,yao2018fastdeepiot}, and investigated alternative GPU-based implementations of neural networks that offer real-time guarantees~\cite{pujol2019generating}, as well as modes of interactions with GPUs to improve timeliness, throughput, and accuracy of neural network workflows~\cite{yang2019re,pujol2019generating}. Techniques to reduce communication cost in offloading machine intelligence have also been proposed~\cite{ikeda2018reduction,eshratifar2018energy}. 

This work differs from the state of the art in two respects. First, of the current real-time systems literature, to the authors' knowledge, {\em we are the first to consider a deep neural network model that offers mandatory and optional parts\/}. We thus complement existing approaches (that consider per-layer approximations~\cite{bateni2018apnet} and code optimizations~\cite{yang2019re,pujol2019generating}), offering a new degree of customization freedom. Second, we are the first to {\em consider the dependence of the utility of optional parts on the characteristics of input data\/}. Specifically, we propose a novel utility metric, based on {\em probability of output correctness\/} (that we call {\em confidence\/}), and show that this metric can be predicted on (input) instance-by-instance basis, using a recent technique in deep learning~\cite{yao2018rdeepsense}. As a consequence, our work enables the implementation of a near-optimal algorithm for AI confidence maximization subject to schedulability constraints.   

The rest of this paper is organized as follows. Section~\ref{sec:DeepCrimson} introduces the technical details of the scheduling model. We describe system implementation in Section~\ref{sec:implementation}. The evaluation is presented in Section~\ref{sec:Evaluation}. We introduce related work in Section~\ref{sec:Related} and conclude in Section~\ref{sec:Conclusion}.


\section{Neural Networks as Imprecise Computations}
\label{sec:DeepCrimson}
In this section, we review relevant fundamentals of neural networks, present a model of neural network tasks that breaks them into mandatory and optional stages, and introduce a near-optimal approximation algorithm for accuracy-maximizing scheduling of these stages, based on a simple dynamic programming formulation and a fully polynomial time approximation scheme.

\subsection{Neural Networks Background} 
Consider a platform that runs deep neural network tasks to support intelligent real-time applications. Deep neural networks have a layered structure. External inputs, such as images or sound clips, are applied to the first layer. The output of one layer is then input to the next. A layer can be thought of as a collection of computations (called nodes), that can be performed in parallel, each computing a different feature over the layer's input data. The number of such nodes (called the size of the layer) is the dimensionality of the feature space. Note how inputs to a layer are themselves features computed by the previous layer. Thus, as more layers are stacked, each subsequent layer's features are computed in progressively more complex spaces, as its inputs constitute outputs of more layers of prior processing. The last layer, called the output layer, can directly compute a classification result based on its (already very complex) input feature space. The total number of layers is called neural network {\em depth\/}. On a GPU, it is convenient to keep the size of each layer of the deep neural network the same, allowing parallel layer-by-layer execution of nodes on GPU cores. Layer size can be configured such that all GPU cores are utilized.

Now consider, for example, a network that processes images to detect the presence of various objects, such as humans, cars, or buildings.
More complex images need more complex feature computations, and hence more layers to produce a correct classification; a picture of an empty blue sky will need far fewer layers to yield a high quality classification result compared to complex cluttered images. The needed neural network depth is therefore {\em data-dependent\/}.  Hence, the scheduled depth of neural network processing becomes an interesting scheduling parameter (besides end-to-end deadlines). Since task complexity is data dependent, this parameter is not known {\em a priori\/}. 

A related challenge lies in the lack of well-defined output {\em utility metrics\/} and {\em utility functions\/} to serve as a foundation for deciding on the best neural network depth. Indeed, quantification of utility has always been a challenge in most research aiming at optimizing application-perceived quality metrics. In this paper, we propose {\em accuracy of results\/} as the utility measure. We call this metric, {\em confidence\/} (in correctness). Note that, this metric is largely generalizable, regardless of what the results are used for, since it is always desirable to have accurate outputs. It is also trivial to extend this metric to {\em weighted\/} accuracy, for example, if some tasks are more important than others. This property makes accuracy maximization a widely applicable contribution across a variety of machine learning algorithms and applications. This utility function can be computed using solutions proposed in recent literature that estimate probabilistic confidence in output correctness of deep learning systems  (e.g., confidence in correctness of classifier output)~\cite{yao2018rdeepsense, yao2018apdeepsense}. 

In this section, we introduce the task model and a corresponding scheduling algorithm, based on imprecise computations, that maximizes utility across the task set subject to schedulability constraints. 

\subsection{Anytime Neural Network}
\label{sec:nn-model}
We consider a model, where requests for executing machine intelligence tasks (such as identifying and classifying obstacles in an image) arrive 
in an event-driven fashion. 
For example, if requests are triggered by events such as motion {\em detection\/} (e.g., by a sensor in a smart space) or obstacle {\em detection\/} (e.g., due to LIDAR reflections), triggering a more complex object {\em identification\/} task.

Normally, tasks will be dispatched on a GPU. In this paper, we assume that no GPU sharing occurs. Hence, only one task executes on the GPU at a time. This simplifying assumption is consistent with our ability to configure neural network layer size to match the available number of cores, and allows the work to remain independent of proprietary algorithms that might be implemented within the GPU to handle task sharing. Future extensions of this work can relax this assumptions by considering GPU pipelining/sharing~\cite{yang2019re}, GPU clusters~\cite{jeon2018multi}, and GPU heterogeneity~\cite{pujol2019generating}. The execution of multiple consecutive layers constitutes a {\em stage\/}. Individual stages cannot be preempted. A stage cannot begin to execute until its predecessor is complete.
A scheduler maintains a queue, deciding which stage of which task to pass to the GPU next.

At time, $t$, let the set of tasks that have arrived but have not yet completed be denoted by $\mathcal{J}(t)$, where $|\mathcal{J}(t)| =N(t)$ is the number of tasks. For each task, $J_i \in \mathcal{J}(t)$, let the number of stages in the corresponding neural network be $L_i$. We call the $l$-th stage, $J_{il}$. Hence, we can express $\mathcal{J}_i$ as the set $\mathcal{J}_i= \{J_{il}  \mid l = 1, \cdots, L_i \}$, $\mathcal{J}_i^{l'}$ as the set $\{J_{il} \mid l=1, \cdots, l'\}$. Let the absolute deadline of a task $J_i$ be $d_i$; and let the worst-case execution time of a stage, $J_{il}$, be $p_{il}$. In our system, since most of the processing occurs on the GPU, we are able to ignore the part of the processing that occurs on the CPU. In an architecture where data copying is fast, the CPU part tends to be a small constant that can simply be subtracted from the end-to-end deadline and ignored. Thus, our execution time is approximated by the GPU execution time including any time the GPS spends copying data from and to the CPU. We find it useful to also define the cumulative sum of execution times of the first $L$ stages of a task (for any $L \leq L_i$).  Let that sum for a task, $\mathcal{J}_i$, be denoted by $P_i^L = \sum_{l=1}^L p_{il}$. Similarly, let the finish time of the $L$-th stage, $J_{iL}$, be $F_i^L$, and let the reward achieved for running the first $L$ stages of a task $J_i$ by the deadline (i.e., with $F_i^L \leq d_i$) be denoted by $R_i^L$.  No reward is derived from the execution of stages that miss the task deadline. 


\begin{algorithm}        
\caption{Dynamic programming algorithm}   
\label{alg:DeepCrimson}                     
\begin{algorithmic}[1]
\small 
\STATE Inputs: request index to start, $k$
\FOR {$i \in [ k-1, \cdots, N]$}
\FOR{$r \in [1, \cdots, \lfloor NR\rfloor_{\Delta}]$}
\IF{$r > \lfloor iR\rfloor_{\Delta}$}
\STATE $S(i+1, r)$ = $S(i, r)$; $P(i+1, r)$ = $P(i, r)$; \textbf{continue}
\ENDIF
\STATE Update $S(i+1, r)$ and $P(i+1, r)$ with \eqref{eqn:dynmaic_program0} and \eqref{eqn:dynmaic_program}.
\ENDFOR
\ENDFOR
 \end{algorithmic}
\end{algorithm}
\setlength{\textfloatsep}{0pt}

The purpose of the scheduler is to decide on the number of stages (called {\em depth\/}, $l_i$, to execute for every current task, $\mathcal{J}_i$, such that the total reward, $\sum_{i=1}^N R_i^{l_i}$ is maximized, subject to the schedulability constraint $F_i^{l_i} \leq d_i$. If dropping entire tasks is disallowed, we further constrain the problem to state that at least $ l_i\ge \omega_i$ stages be executed of each task, $\mathcal{J}_i$. We then call stages, $\{ J_{il}: 1 \leq l \leq \omega_i\}$ the {\em mandatory\/} part of the task, and call stages $\{J_{il}: \omega_i < l \leq l_i\}$, {\em optional\/}.
Unfortunately, scheduling with deadlines and rewards is NP-complete. Therefore, we propose a fully polynomial time approximation scheme, taking EDF as the underlying scheduling policy.

Note that, as argued above, the scheduled resource is the GPU, not the CPU. By EDF, we therefore mean that tasks (or, more accurately, individual stages of tasks) are sent from the CPU to the GPU in EDF manner. When preemption is required, it can be performed on stage boundaries. In principle, EDF is suboptimal when non-preemptive regions are involved (because a stage that executes on the GPU cannot be preempted). We find, however, that execution of individual stages in practice takes roughly comparable times. Thus, the worst-case amount of non-preemption per task (which is the execution time of one stage) is roughly the same for all tasks. This amount can simply be subtracted from the end-to-end task deadline upfront and preemptive schedulability conditions used on the resulting deadlines. Since the subtracted amount is roughly the same for all tasks, the order of deadlines remains the same, and EDF remains (approximately) optimal.  

In the rest of this paper, when we mention task deadlines, we shall implicitly refer to deadlines computed after the adjustments above have been performed. That is to say, deadlines that result after (i) the estimated amount of CPU processing and (ii) the amount of GPU non-preemption have been subtracted from the original value.
  
\subsection{Near Optimal Depth Assignment and Stage Scheduling}~\label{sec:DeepCrimson}
Let us quantize the reward function, such that all rewards $\{R_i^L\}$ are measured in multiple of some basic increment $\Delta$. For simplicity, we denote $\lfloor R_i^L/\Delta \rfloor$ as $\lfloor R_i^L \rfloor_\Delta$. This quantization introduces errors but reduces the problem size of our stage scheduling algorithm. In this subsection, we will introduce and analyze such tradeoff between efficiency and optimality.


Without loss of generality, assume that tasks, at time $t$, are indexed according to their absolute deadlines, $\{d_i\}$, such that $d_1 \leq d_2 \leq ... \leq d_N$. Choosing the right depth, $l_i$, for each task (and hence the right reward, $R_i^{l_i}$) can be cast as a dynamic programming problem. 
Let $R$ be the largest reward we can achieve from a single task, \ie $R = \max\{R_i^{l_i}\}$. We can thus upper bound the total reward of $N$ tasks with $N\cdot R$. 
The dynamic programming formulation maintains the two-dimensional table of subproblems, where one dimension represents the number of considered tasks, \ie $i\in\{1,\cdots, N\}$, 
and the other represents quantized total rewards, \ie $r\in\{\Delta, 2\Delta, \cdots, \Delta \lfloor NR\rfloor_{\Delta}\}$.
Each cell in the table is the solution to a subproblem, $S(i,  r)$, that represents {\em optimally\/} selecting the depth for the top $i$ tasks (who have the top-$i$ earliest absolute deadlines, according to our indexing), that attains exactly $r$ cumulative reward in total and takes up the least amount of execution time possible.
 For the notational simplicity, we let $P(i, r)$ be the corresponding least amount of execution time used by the top $m$ tasks that achieve exactly $r$ cumulative reward, with a value of $\infty$ to denote no such solution.

For the first row ($i=1$) are simply the utility function of the task, $\mathcal{J}_1$. If given $P_1^l \le d_1$, we will initialize $S(1, R_1^l)$ and $P(1, R_1^l)$ to $\mathcal{J}_1^l$ and $P_1^l$ respectively with all other cells in $S(1, r)$ and $P(1, r)$ being $ \varnothing$ and $\infty$ respectively.
Since task indexes are sorted in increasing order of deadlines, under EDF, a task with a higher index will execute after tasks with lower indexes. Hence, for subsequent rows, the following recursive relation applies for columns 1 through $\lfloor iR\rfloor_\Delta$, where $\Delta\cdot \lfloor iR\rfloor_\Delta$ is the largest possible quantized total cumulative reward for executing first $i$ tasks. For simplicity, we denote $r - \Delta\cdot \left \lfloor R_i^l\right\rfloor_{\Delta}$ as $\bar{r}_{i}^{l}$,
\vspace{-0.2cm}
\begin{equation}
S(i+1, r) = 
 \begin{cases}
 \displaystyle
\argmin_{l \in \{\omega_{i+1},\cdots, L_{i+1}\}}\left\{ P_{i+1}^{l} + P\left(i,\bar{r}_{i+1}^{l}\right), P(i, r)  \right\}, \\ \phantom{AAAAAAAAA}\text{if } P_{i+1}^{l} + P\left(i, \bar{r}_{i+1}^{l}\right)  \leq d_{i+1}\\
    \varnothing,              \\ \phantom{AAAAAAAAAAAAAAAAAAAa}\text{otherwise}
 \end{cases}  
\label{eqn:dynmaic_program0}
\end{equation}
\begin{equation}
P(i+1, r) = 
 \begin{cases}
 \displaystyle
\min_{l \in \{\omega_{i+1},\cdots, L_{i+1}\}}\left\{ P_{i+1}^{l} + P\left(i, \bar{r}_{i+1}^{l}\right), P(i, r)  \right\}, \\ \phantom{AAAAAAAAA}\text{if } P_{i+1}^{l} + P\left(i, \bar{r}_{i+1}^{l}\right)  \leq d_{i+1}\\
    \infty,               \\ \phantom{AAAAAAAAAAAAAAAAAAAa} \text{otherwise}
 \end{cases}  
\label{eqn:dynmaic_program}
\end{equation}

The dynamic programming algorithm using the recursive relationship in Equations \eqref{eqn:dynmaic_program0} and \eqref{eqn:dynmaic_program} is illustrated in Algorithm~\ref{alg:DeepCrimson}, where $k$ is the starting task index for the current dynamic table update. Namely, when a task $\mathcal{J}_k$ arrives whose deadline is $d_k$, existing table rows for tasks with deadlines $d < d_k$ stay the same. Table rows for tasks with deadlines $d \geq d_k$ (including the new arrival) need to be (re)computed. 

Sorting task indexes by deadline, given an updated table, the optimal solution starts with the cell $S(N,r_{f})$, where $N$ is the task with the largest deadline that arrived and have not yet finished, and $r_{f}$ is the quantized time corresponding to the absolute deadline of that task. 
The value $l_N^*$ in that cell is the optimal depth for task $\mathcal{J}_N$. The recursive step is as follows: After determining the optimal depth, $l_i^*$, for task $i$, from cell $S(i,r)$, we visit cell $S\left(i-1,r - \Delta \cdot \left\lfloor R_i^{l_i^*}\right\rfloor_\Delta \right)$, and decrement $i$ (until we reach $\mathcal{J}_1$).

Intuitively speaking, Algorithm~\ref{alg:DeepCrimson} can achieve the near optimal solution as the reward quantization step $\Delta$ approaches 0. By defining $\Delta = \epsilon R/ N$  and scaling the quantization step with respect to $\epsilon$, we will be able to get a solution that is at least $(1-\epsilon)$  of the optimal one.
\begin{theorem}:
With $\Delta = \epsilon R/ N$, Algorithm~\ref{alg:DeepCrimson} is a $(1-\epsilon)$ approximation of the optimal task stage scheduling.
~\label{thm:approx_bound}
\end{theorem}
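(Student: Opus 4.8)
The plan is to run the standard rounding-and-scaling argument used for knapsack-type FPTAS results. First I would establish that Algorithm~\ref{alg:DeepCrimson}, via the recurrences \eqref{eqn:dynmaic_program0} and \eqref{eqn:dynmaic_program}, computes an \emph{exact} optimum for the modified problem in which the true rewards $R_i^L$ are replaced by their quantized surrogates $\Delta\lfloor R_i^L\rfloor_\Delta$. Concretely, I would prove by induction on $i$ that $P(i,r)$ equals the least cumulative GPU time over all depth assignments to the first $i$ tasks that are EDF-feasible and whose total quantized reward is \emph{exactly} $r$, and that $S(i,r)$ stores a witnessing assignment (with $P(i,r)=\infty$, $S(i,r)=\varnothing$ when none exists). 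The base case $i=1$ is exactly the stated initialization; for the inductive step, because tasks are indexed by nondecreasing deadline, appending task $\mathcal{J}_{i+1}$ at depth $l$ after a prefix solution of time $P(i,\bar r_{i+1}^l)$ is EDF-feasible \emph{iff} $P_{i+1}^l + P(i,\bar r_{i+1}^l)\le d_{i+1}$ (all earlier prefix deadlines were already respected inside that subsolution), so minimizing over $l$ and over the option of leaving task $i{+}1$'s contribution unchanged reproduces \eqref{eqn:dynmaic_program}, while \eqref{eqn:dynmaic_program0} merely records the minimizing $l$. The backtracking step then extracts a feasible assignment whose quantized total reward is the largest achievable.

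Next I would bound the loss due to quantization. Let $\{l_i^\ast\}$ be a true optimal assignment with value $\mathrm{OPT}=\sum_{i=1}^N R_i^{l_i^\ast}$. Feasibility does not depend on the reward values, so $\{l_i^\ast\}$ is also feasible for the quantized problem, and its quantized value satisfies $\sum_{i=1}^N \Delta\lfloor R_i^{l_i^\ast}\rfloor_\Delta \ge \sum_{i=1}^N\bigl(R_i^{l_i^\ast}-\Delta\bigr)=\mathrm{OPT}-N\Delta$, since rounding down costs less than $\Delta$ per task. Let $\{\hat l_i\}$ be the assignment returned by the algorithm. Because the DP is exact for the quantized objective, the quantized value of $\{\hat l_i\}$ is at least that of $\{l_i^\ast\}$; and since quantized reward never exceeds true reward, the \emph{true} value of $\{\hat l_i\}$ obeys $\sum_{i=1}^N R_i^{\hat l_i}\ \ge\ \sum_{i=1}^N \Delta\lfloor R_i^{\hat l_i}\rfloor_\Delta\ \ge\ \mathrm{OPT}-N\Delta$.

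Finally I would substitute the chosen step size $\Delta=\epsilon R/N$, giving $\sum_i R_i^{\hat l_i}\ge \mathrm{OPT}-\epsilon R$. Using $\mathrm{OPT}\ge R$ — the optimal schedule attains at least the reward of the single most rewarding feasible task configuration, and executing the mandatory parts of the remaining tasks only adds to that — we conclude $\sum_i R_i^{\hat l_i}\ge \mathrm{OPT}-\epsilon\,\mathrm{OPT}=(1-\epsilon)\mathrm{OPT}$, with the returned assignment feasible by construction, which is the claim.

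I expect the main obstacle to be the first step: carefully justifying that the global EDF schedulability constraint decomposes into the single per-step prefix test $P_{i+1}^l+P(i,\bar r_{i+1}^l)\le d_{i+1}$, and that the ``exactly-$r$ reward, minimum-time'' subproblem formulation genuinely composes (an optimal prefix of an optimal solution is itself optimal for its own reward level, so suboptimal prefixes can be discarded). The scaling arithmetic in the later steps is routine once that is settled; the only other thing to keep honest is the mild model-level assumption $\mathrm{OPT}\ge R$, which I would spell out using the mandatory/optional stage structure.
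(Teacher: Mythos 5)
Your proposal follows the same rounding-and-scaling argument as the paper's proof: the dynamic program is exact for the quantized rewards, any optimal assignment loses at most $\Delta$ per task under flooring, and substituting $\Delta=\epsilon R/N$ together with $R\le R_{OPT}$ yields the $(1-\epsilon)$ bound. You are in fact somewhat more careful than the paper, which asserts the exactness of the quantized DP as a ``trivial claim'' without the inductive argument you sketch, and leaves the needed fact $R\le R_{OPT}$ implicit in the last step of its inequality chain.
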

\begin{proof}:
Let OPT be the solution to our task stage scheduling problem. Since we have a quantized version with the solution $\mathcal{S}$ proposed by Algorithm~\ref{alg:DeepCrimson}, we can achieve a trivial claim that:
\begin{equation}
\sum_{\mathcal{J}_i^l \in \mathcal{S}} \Delta\cdot \left \lfloor R_i^l \right\rfloor_\Delta \ge \sum_{\mathcal{J}_i^l \in \text{OPT}} \Delta \cdot \left \lfloor R_i^l \right \rfloor_\Delta
\label{eqn:proof_org}
\end{equation}
From the left-hand side of~\eqref{eqn:proof_org}, it is obvious to see that
\begin{equation}
\sum_{\mathcal{J}_i^l \in \mathcal{S}} R_i^l  \ge \sum_{\mathcal{J}_i^l \in \mathcal{S}} \Delta\cdot \left \lfloor R_i^l \right \rfloor_\Delta
\label{eqn:proof_lhs}
\end{equation}
Recall that $\Delta = \epsilon R/N$, and we let the total cumulative reward of optimal scheduling policy be $R_{OPT}$. From the right-hand side of~\eqref{eqn:proof_org}, 
\begin{equation}
\begin{split}
 \sum_{\mathcal{J}_i^l \in \text{OPT}} \Delta \cdot \left \lfloor R_i^l \right \rfloor_\Delta & \ge \Delta  \sum_{\mathcal{J}_i^l \in \text{OPT}} \frac{R_i^l}{\Delta} - 1\\
 & \ge \sum_{\mathcal{J}_i^l \in \text{OPT}} R_i^l  - N \cdot \Delta \\
 & \ge \sum_{\mathcal{J}_i^l \in \text{OPT}} R_i^l  - \epsilon R \\
  & \ge (1-\epsilon)  R_{OPT}
\end{split}
\label{eqn:proof_rhs}
\end{equation}
Combining~\eqref{eqn:proof_org}~\eqref{eqn:proof_lhs}, and~\eqref{eqn:proof_rhs}, we can arrive at the following conclusion.
\begin{equation}
\sum_{\mathcal{J}_i^l \in \mathcal{S}} R_i^l  \ge (1-\epsilon)  R_{OPT}
\end{equation}
\end{proof}

\subsection{Predicting Utility of Future Stages}
\label{sec:stage-utility}
It remains to comment on how utility is computed.
Recent work~\cite{abdelzaher:19} proposed a technique for computing confidence in correctness of deep learning outputs. It groups neural network layers into a small number of stages (of multiple layers each). At the end of each stage, a thin softmax function layer is attached to compute a classification at selected internal layers. The output of the classifier is a vector of class probabilities, where the largest probability is called the classification confidence; the probability that the class selected by the classifier is correct.
Accurate confidence estimation was discussed in several recent publications~\cite{gal2016dropout,yao2018rdeepsense,lakshminarayanan2016simple, yao2018deep}. 
We choose the model described in~\cite{abdelzaher:19} (in its Section II) for utility estimation. The model estimates, for each task $\mathcal{J}_i$, a confidence in correctness of output of the current stage, say, $L_i$, which is a probabilistic value, $R_i^{L_i}$, between 0 and 1. 

To estimate utility of future stages, several heuristics are empirically compared in this paper. These heuristics are: 

\begin{figure*}[!htb]
\centering
\includegraphics[width=0.8\linewidth]{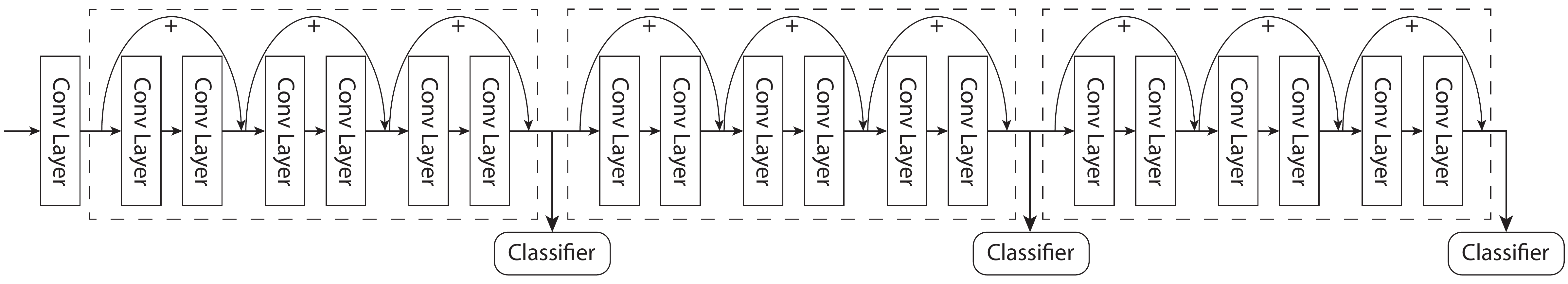}
\caption{An illustration of three-stage ResNet.}
\label{fig:3stag_resnet}
\vspace{-0.3cm}
\end{figure*}

\begin{itemize}
\item
{\em Maximum increase:\/} Assume that the next stage will increase the utility from $R_i^{L_i}$ to 1. Hence, $R_i^{L_i+1} = 1$. This approach favors tasks with the lowest current confidence, since they promise the most increase in utility if executed further.
\item
{\em Exponential increase:\/} Assume that the next stage will reduce distance to 1 in half. Hence, $R_i^{L_i+1} = R_i^{L_i} + 0.5(1 - R_i^{L_i})$. This, as we show in the evaluation later, is the most accurate approximation of utility functions of deep learning systems implemented as imprecise computations.
\item
{\em Linear increase:\/} Assume that the next stage will continue to improve utility linearly. Hence,  $R_i^{L_i+1} = \min (1, R_i^{L_i} \cdot P_i^{L_i+1}/P_i^{L_i})$.
\end{itemize}

\subsection{Updating Depth Assignment}
In principle, the above heuristics can be used to compute the utility of each future stage of a task once its first (mandatory) stage has been executed. In practice, however, it is useful to revisit estimated utility of future stages upon the execution of each new stage of the task. This may result in a problem if the updated estimates change, rendering the original (thought-to-be-optimal) schedule suboptimal. It is easy to show that, if the updated future utility of a stage (of the current task) becomes larger, our previous depth assignment still preserves optimality, since it means we did the right thing running that task. Otherwise, if the updated utility is lower, it might be we are running a suboptimal task and may need to reconsider. In other words, a recalculation of optimal depth assignment is in order. Recalculating the depth assignment with dynamic programming is too cumbersome. Since the current task is always the one with the shortest deadline under EDF, we would have to re-calculate the optimal depth for {\em all\/} subsequent tasks if using dynamic programming. We therefore employ a greedy heuristic instead. The heuristic tries to replace the remaining stages in $\mathcal{J}_1$ (the current task) with the stages in other tasks that can achieve higher cumulative reward. Assume that the remaining stages in $\mathcal{J}_1$ are from depth ${l_1}+1$ to $l_1^*$, and $l_i^*$ is the previous optimal depth selection for task $\mathcal{J}_i$. Thus: 
\begin{equation}
\begin{split}
&\hat{l_i^*} = \argmax_{\substack{i\in\{2, \cdots, N\} \\ l\in\{l_i^*+1, \cdots, L_i\}}} R_i^l - R_i^{l_i^*} \\
&\text{s.t.} \quad \sum_{l = l_i^*}^{l} p_{il} \le \sum_{l'=l_1+1}^{l_1^*} p_{1l'}
\end{split}
\label{eqn:fast_update}
\end{equation}
If $R_i^{\hat{l_i^*}} - R_i^{l_i^*} > R_1^{l_1^*} - R_1^{l_1}$, we replace the depth assignment of $\mathcal{J}_1^{l_1}$ with $\mathcal{J}_i^{\hat{l_i^*}}$. Otherwise, we follow the original depth selection.

\begin{figure}[!htb]
\vspace{-0.3cm}
\centering
\includegraphics[width=0.8\linewidth]{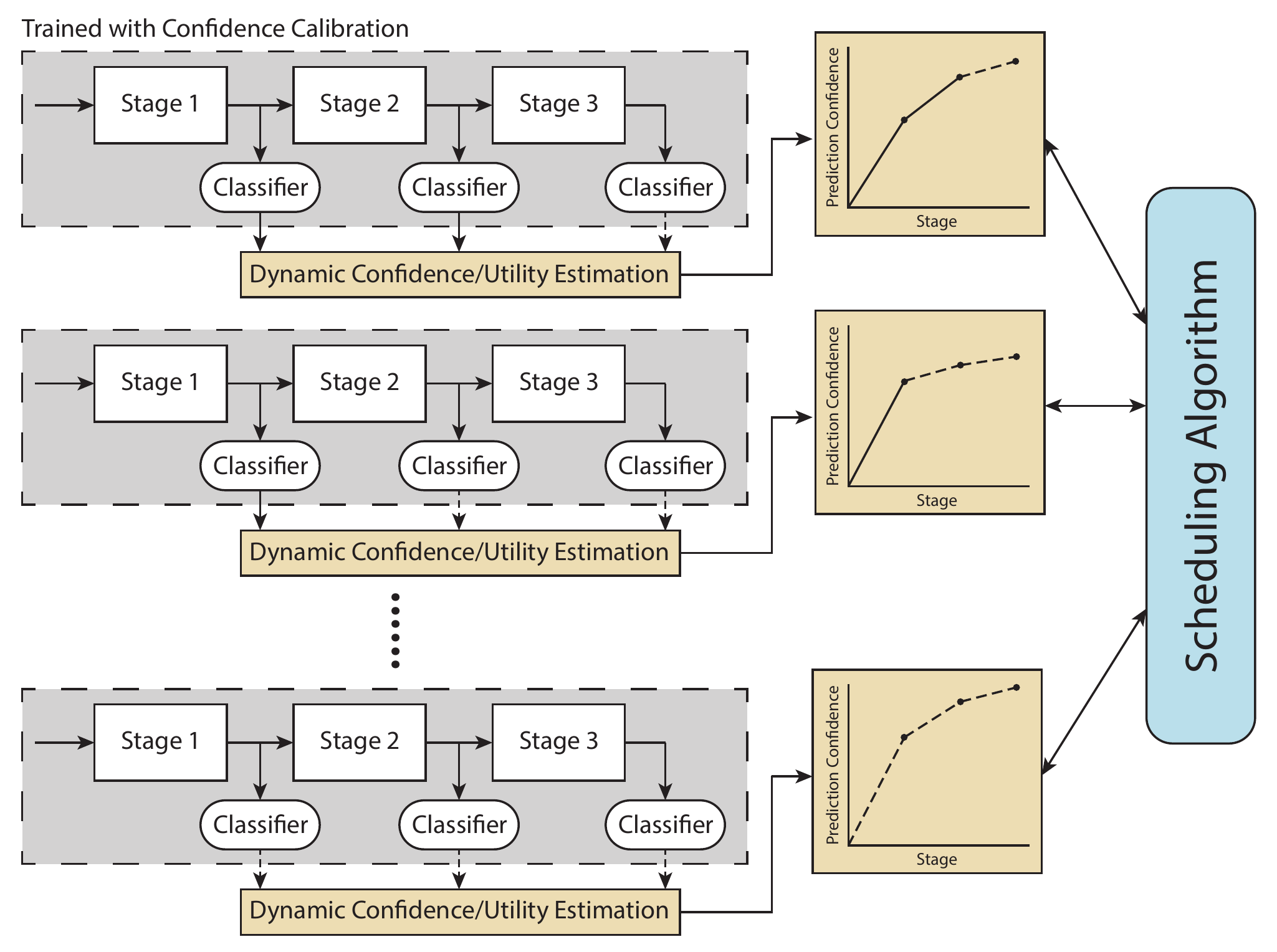}
\caption{System architecture of RTDeepIoT scheduling framework implementation.}
\label{fig:implementation}
\vspace{-0.1cm}
\end{figure}

\section{Implementation}~\label{sec:implementation}
We implemented a user space scheduling framework RTDeepIoT to verify the effectiveness of our imprecise computation model at scheduling neural network tasks. Implementing the scheduler in {\em user space\/} is clearly a disadvantage, as it increases scheduling overhead. We opted for this design decision to simplify experimentation. In essence, the resulting performance becomes a worst-case bound. A more efficient implementation (beyond the scope of this paper) can only make performance better. Having said so, one should also note that a user space implementation has its own advantages. First, it makes the work compatible with many popular operating systems, since no changes are done to the OS scheduler. Second, it facilitates the exploitation of widely deployed deep learning libraries and applications. Specifically, we integrate our scheduler with TensorFlow~\cite{abadi2016tensorflow}. TensorFlow libraries interface internally with GPU APIs, offering the application developer only high-level control of what to process. Our scheduler sits on top of that exported interface, leaving the underlying machine learning libraries in tact. Finally, since utility of future stages may need to be recomputed (upon each stage execution), and since this re-computation is based on returned application-level (confidence) results, the system is already executing in user space when these results become available. The user space is thus a natural place for making the decision on what to run next, as otherwise a system call would be needed anyway to pass the results to a kernel-level scheduler, thus reducing savings. An assessment of performance-savings with a kernel-level scheduler is delegated to future work. 


\subsection{Application Workload \& Benchmark}

The reemergence of deep learning leads to the impressive achievements in computer vision. Vision-based object recognition, labelling objects in a picture, becomes one of the most canonical real-world applications of neural networks. Object recognition also serves as an important building block in many intelligent system, including autonomous vehicles. We therefore choose object recognition as the application workload in this paper. 


In order to verify the efficacy of our neural network stage scheduling framework, we build a proof-of-concept prototypes of object classification service based on the state-of-the-art residual neural networks (ResNet)~\cite{he2016deep}, which are trained and evaluated on CIFAR-10 and ImageNet datasets respectively. Such stage-wise design can be easily applicable to other neural network architectures for computer vision applications~\cite{hu2018anytime}. 

An alternative might have been to evaluate our framework with data sets from the domain of autonomous vehicles or human activity recognition. We opted to keep the scope of the evaluation limited because the used neural networks need to be retrained (to offer mandatory and optional components) for every new application domain we evaluate. Standard pre-trained neural networks generate final outputs only. Instead, in our model, we must train the network to generate both the intermediate results after each stage, and the confidence estimates in these intermendiate results. We thus caveat our evaluation by the fact that it was done on vision-based (object classification) tasks only. While we do not have a reason to believe that other data sets would yield substantially different trends, we do not explore other applications and nueral network types in this paper.

In the rest of the evaluation, we divide the number of layers in ResNet uniformly into three stages. 
As shown in Figure~\ref{fig:3stag_resnet}, a stage contain multiple layers. At the end of each stage, a simple softmax classifier is appended, using the end-of-stage aggregated features for classification. When the execution of a stage finished, it will output a tuple in the form \textit{(predicted value, confidence)}. \textit{Predicted value} is the classification result from the current stage, specifying the most likely classification. \textit{Confidence} describes the likelihood that this classification is correct. For example, a picture can be classified as a cat, dog, or cow, with probabilities 0.6, 0.3, and 0.1, respectively. The classification result is then (``cat", 0.6).

\subsection{System Architecture}
The system architecture of RTDeepIoT scheduling framework is illustrated in Figure~\ref{fig:implementation}. 
The objective of RTDeepIoT framework is to provide an abstraction of real-time neural network execution for service requests. 

The scheduler is invoked upon the occurrence of one of two event types. First, when an object detection service issues a new task request, it sends the absolute deadline and input image to the RTDeepIoT framework through a REST API. Once RTDeepIoT receives the request, the schedule is updated to include (at least) the arrived task's mandatory part. The current depth assignment is updated for all tasks according to Algorithm~\ref{alg:DeepCrimson}. Second, when the previous execution of a neural network stage has finished, RTDeepIoT updates the corresponding task utility prediction according to the newly acquired confidence. The depth selection is updated according to the greedy heuristics~\eqref{eqn:fast_update}. 
Finally, once a task finishes all its scheduled stages up to the assigned depth or once its deadline is missed, RTDeepIoT returns the latest available inference result back and removes all related information from the scheduling table. 

\section{Evaluation}
\label{sec:Evaluation}

In this section, we evaluate the proposed scheduling framework on object detection tasks with two real-world benchmark datasets; CIFAR-10 that consists of 10000 test images of 10 classes, and ImageNet that consists of 50000 test images of 1000 classes. Each test image is an object detection service request. The requests arrive in a random but configureable order to evaluate the effectiveness of proposed scheduling system under diverse workload patterns. 
There are $K$ concurrent clients that generate service requests. 
Within a time interval, each request comes with a relative deadline and a random image selected from the shuffled test dataset. The relative deadline is drawn from a uniform distribution described by two parameters: a maximum relative deadline, $D_u$, and a minimum relative deadline, $D_l$. The stage execution times for service requests are known from prior profiling. Specifically, the worst-case execution time of each neural network stage is measured from the server 10,000 times using training data. We calculate the upper bound of a $99\%$ confidence interval and use it as the worst-case execution time in our evaluation. 
Per our imprecise computation model, the scheduling algorithm may decide to run only some subset of the stages by the deadline. We consider a request to have failed to meet its deadline if none of its computation stages are executed before the deadline. The image classification result from the last executed stage of a request (before its deadline) is used as the final inference result.

The edge server that runs the scheduling framework and the object detection service has an Intel i7-4770 CPU, with 32 GB memory and NVIDIA TITAN X Pascal GPU. The evaluation is performed under Ubuntu 16.04 with kernel version 4.13. The residual neural network for object detection is implemented on TensorFlow 1.14.0, achieving the state-of-the-art testing accuracy on two datasets when having no timing constraints. Note that, the GPU chosen in this evaluation is quite computationally advanced by today's standards. However, since IoT applications that require intelligent edge services may take some time to become commonplace, it is likely that the chosen GPU will be more representative of a midrange system by that time. 

In the following subsections, we will evaluate our RTDeepIoT real-time scheduling framework from different perspectives and microbenchmarks, including utility curve prediction, stage scheduling for utility maximization, system hyper-parameter tuning, and system overhead.

\subsection{Utility Curve Prediction}~\label{sec:utility}
One key insight of our proposed imprecise computation model for neural networks is that the utility of optional parts (i.e., the estimated correctness probability of their outputs) depends on the characteristics of input data.  In Section~\ref{sec:stage-utility}, we proposed several heuristics for estimating the utility of future stages. 

In the following experiments, we set the number of concurrent clients $K$ to $20$;  the minimum relative deadline $D_l$ is $0.01$s; the maximum relative deadline $D_u$ is $0.3$s and $0.8$s for CIFAR10 and ImageNet, respectively.
In addition, we define the time quantization step, $\Delta$, to be $0.1$, which will be discussed in detail in Section~\ref{sec:hyperparameter}.

To understand the impact of predicted utility curves on scheduler performance, 
we evaluate our utility-maximizing scheduling using the three simple heuristic utility estimation methods mentioned in Section~\ref{sec:stage-utility}, and compare them with the optimal (but unrealizable) ``oracle'' policy that knows exactly the computed confidence (i.e., utility) of each stage ahead of time. (To implement the oracle, we simply run each test image through all stages ahead of time, record confidence computed at each stage, and give that information to the oracle before the actual experiment starts.) All algorithms are summarized as follows:
\begin{enumerate}
\item RTDeepIoT-Exp: during utility prediction, assumes that the next stage will reduce distance to 1 in half. Hence, $R_i^{l_i+1} = R_i^{l_i} + 0.5\cdot(1 - R_i^{l_i})$.
\item RTDeepIoT-Max: assumes that the next stage will increase the utility from $R_i^{l_i}$ to 1. Hence, $R_i^{l_i+1} = 1$.
\item RTDeepIoT-Lin: assumes that the next stage will continue to improve utility linearly. Hence,  $R_i^{l_i+1} = \min (1, R_i^{l_i} \cdot P_i^{l_i+1}/P_i^{l_i})$.
\item RTDeepIoT-OPT: knows exactly the computed confidence after each stage beforehand.
\end{enumerate}

Evaluation results are illustrated in Figure~\ref{fig:utility_task} -~\ref{fig:utility_lower}. With all these workload patterns, RTDeepIoT-Exp is almost always the best-performing algorithm, other than the optimal RTDeepIoT-OPT. The experiment shows that in order to estimate utility of future stages of deep learning (vision) workloads, we can use the exponential increase function. In the rest of the evaluation section, unless stated otherwise, the exponentially increasing utility function is assumed. When compared with the optimal policy that knows the computed confidence after each stage beforehand, RTDeepIoT-Exp has comparable accuracy, being within $2\%$ from optimal most of the time. 

\begin{figure}[!htb]
\vspace{-0.3cm}
\begin{subfigure}{1.0\linewidth}
  \centering
  \includegraphics[width=0.7\linewidth]{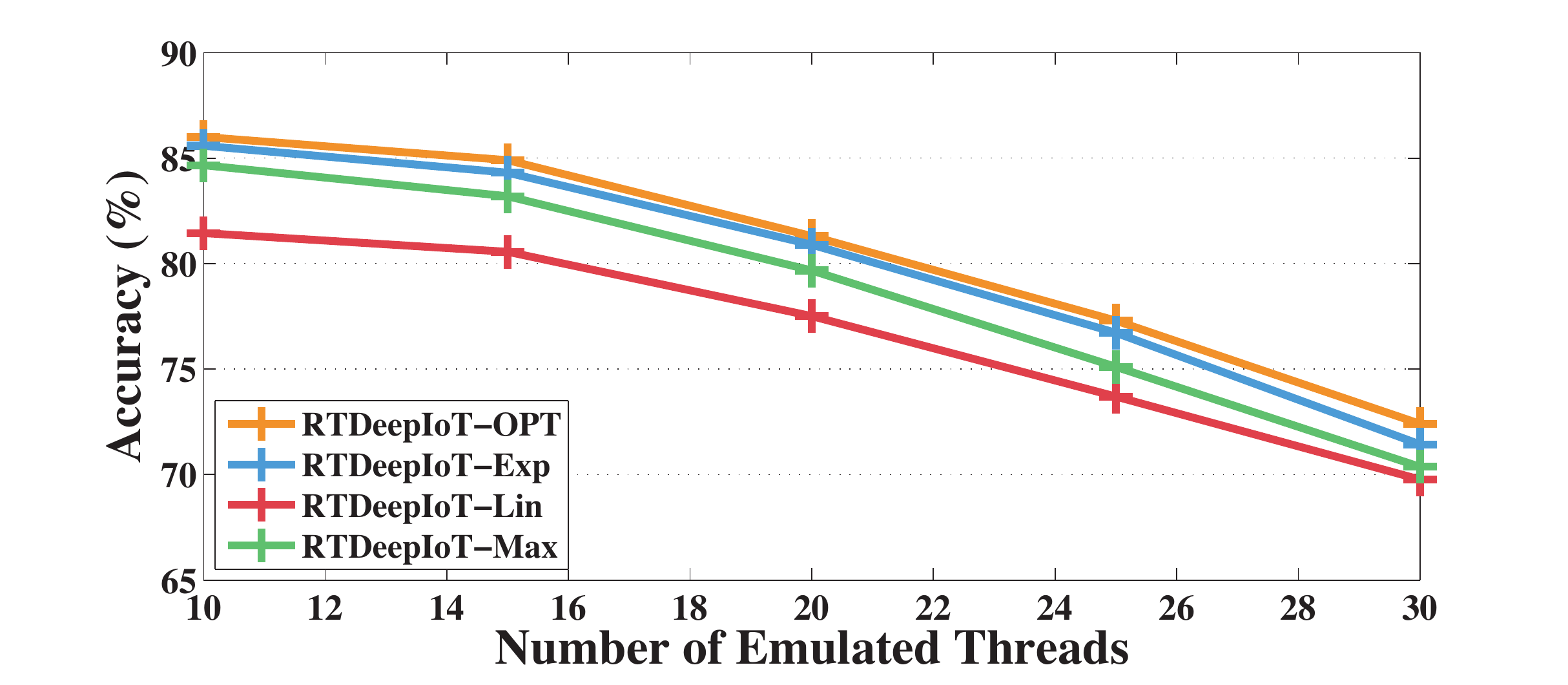}
  \caption{The accuracy with CIFAR10.}
  \label{fig:utility_cifar10_task}
\end{subfigure}
\vspace{-0.2cm}
\begin{subfigure}{1.0\linewidth}
  \centering
  \includegraphics[width=0.7\linewidth]{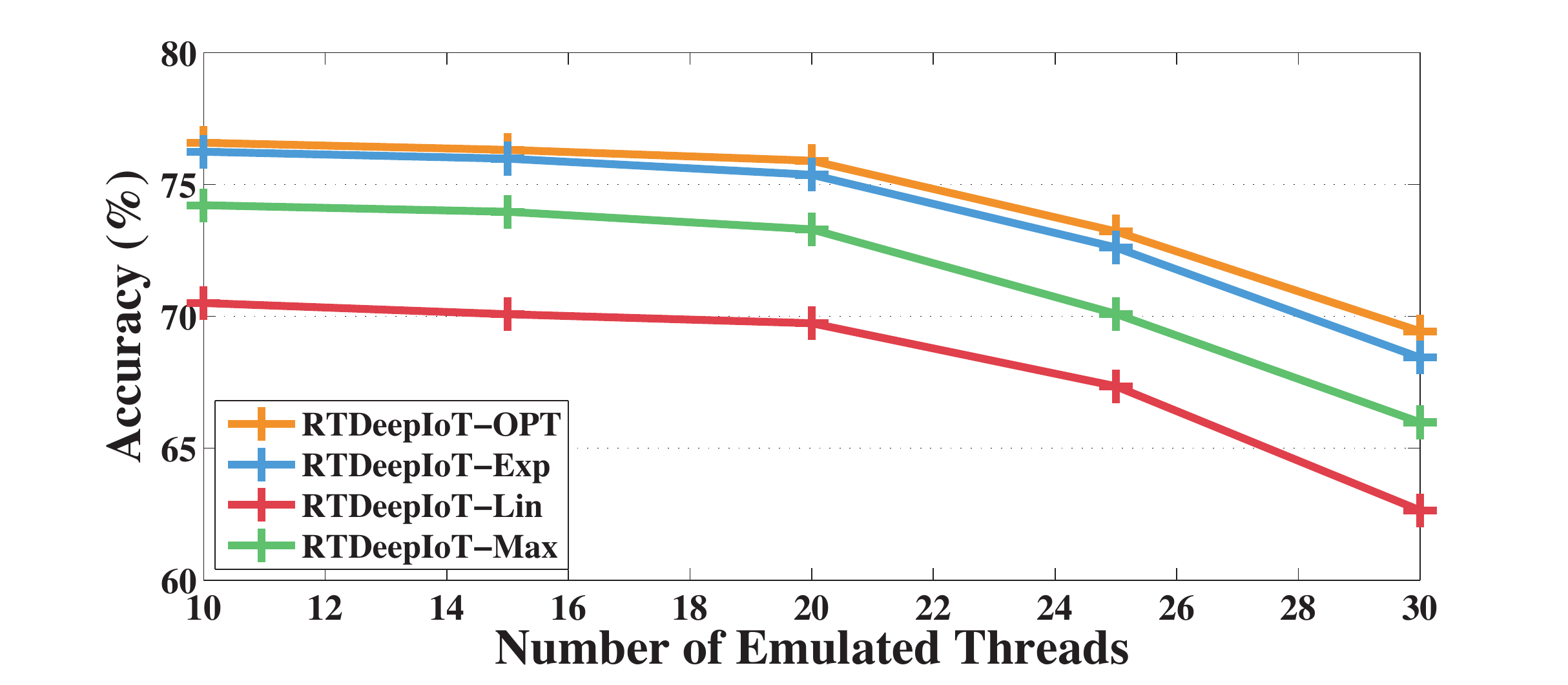}
  \caption{The accuracy with ImageNet.}
  \label{fig:utility_imagenet_task}
  \end{subfigure}
  \caption{Accuracy under $K$ concurrent clients on CIFAR10 and ImageNet.}
  \label{fig:utility_task}
\vspace{-0.3cm}
\end{figure}

\begin{figure}[!htb]
\vspace{-0.3cm}
\begin{subfigure}{1.0\linewidth}
  \centering
  \includegraphics[width=0.7\linewidth]{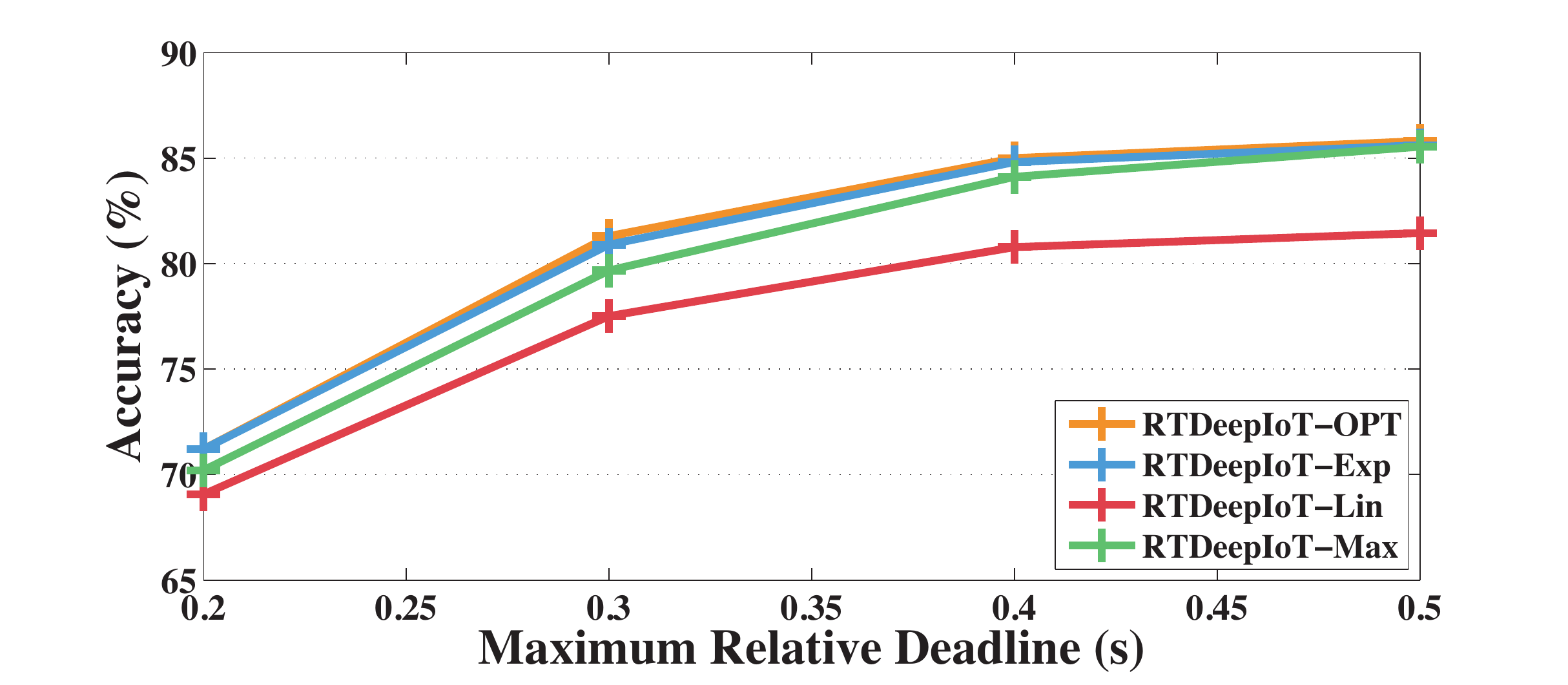}
  \caption{The accuracy with CIFAR10.}
  \label{fig:utility_cifar10_upper}
\end{subfigure}
\vspace{-0.2cm}
\begin{subfigure}{1.0\linewidth}
  \centering
  \includegraphics[width=0.7\linewidth]{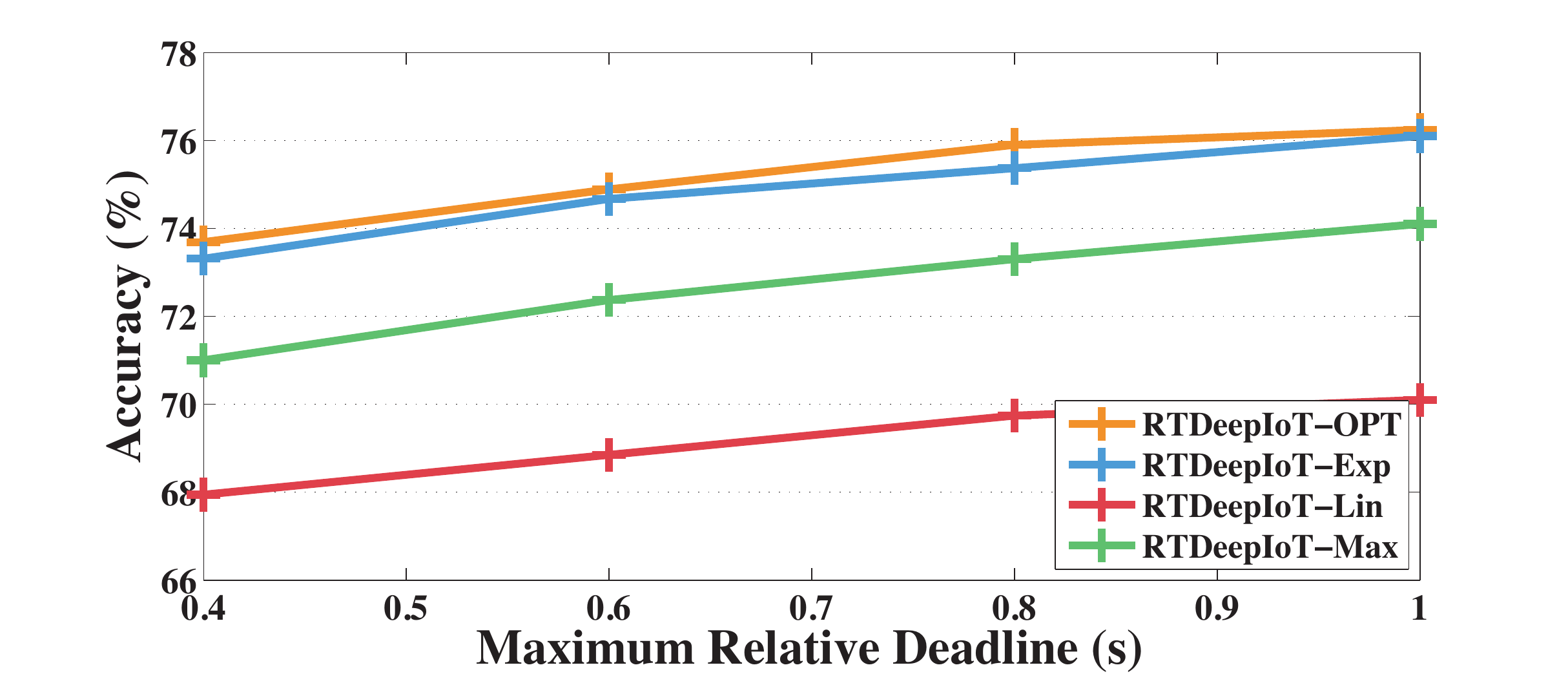}
  \caption{The accuracy with ImageNet.}
  \label{fig:utility_imagenet_upper}
  \end{subfigure}
  \caption{Accuracy under different maximum relative deadlines $D_u$ on CIFAR10 and ImageNet.}
  \label{fig:utility_upper}
\vspace{-0.3cm}
\end{figure}

\begin{figure}[!htb]
\vspace{-0.3cm}
\begin{subfigure}{1.0\linewidth}
  \centering
  \includegraphics[width=0.7\linewidth]{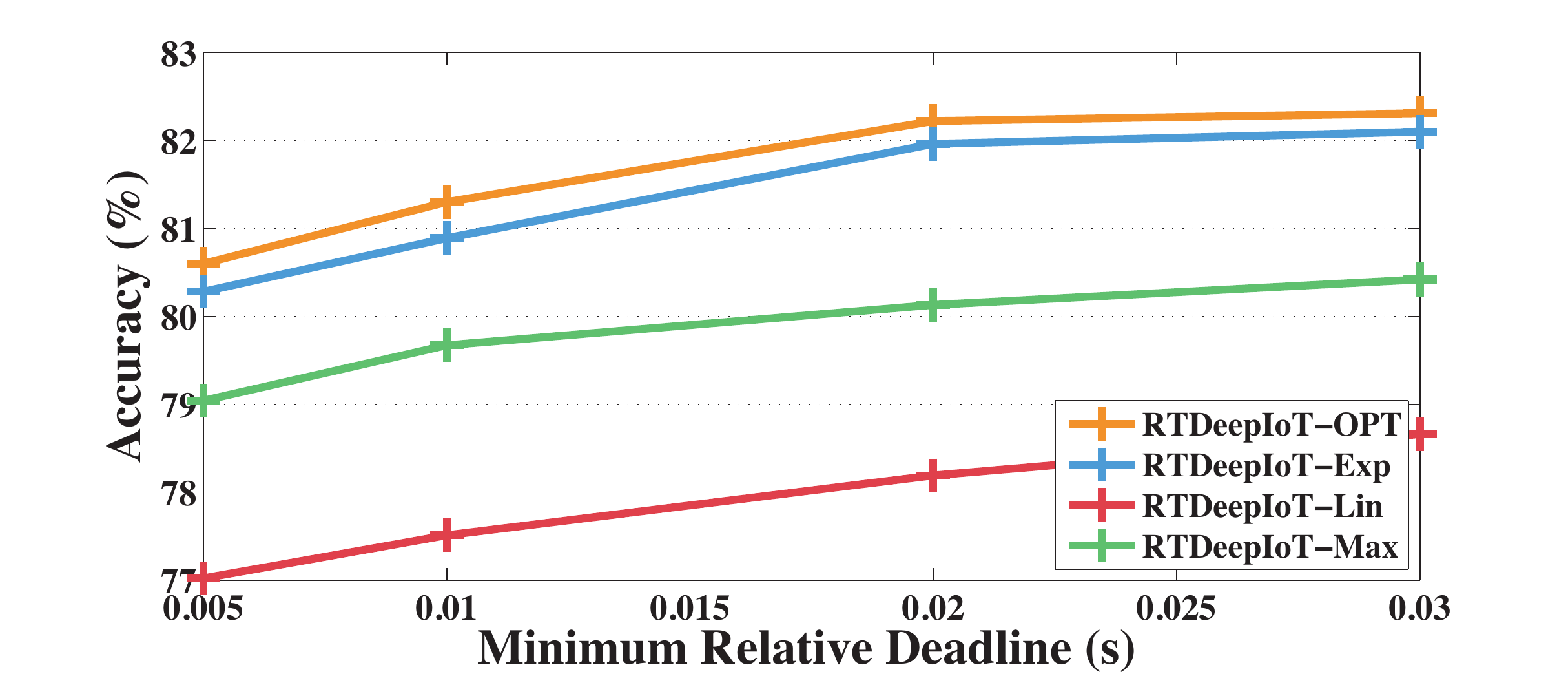}
  \caption{The accuracy with CIFAR10.}
  \label{fig:utility_cifar10_lower}
\end{subfigure}
\vspace{-0.2cm}
\begin{subfigure}{1.0\linewidth}
  \centering
  \includegraphics[width=0.7\linewidth]{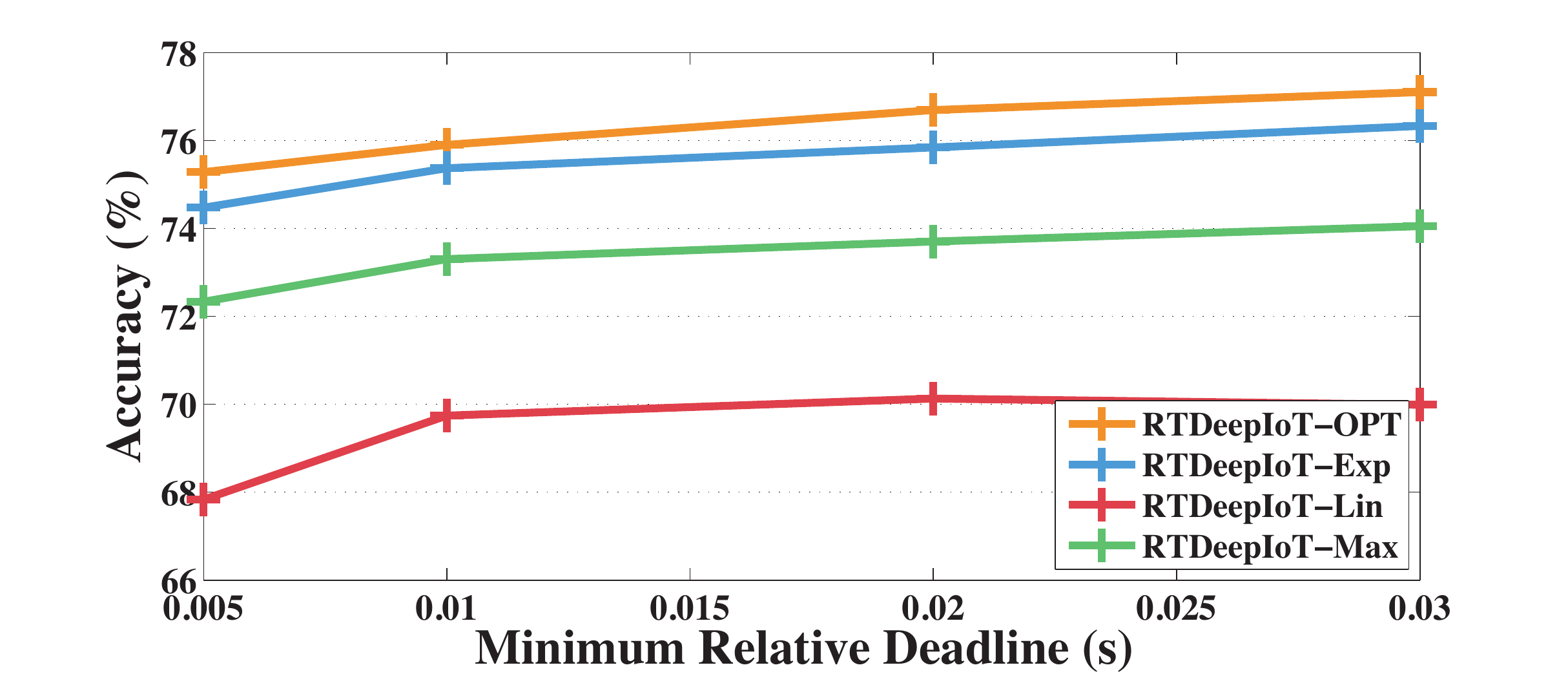}
  \caption{The accuracy with ImageNet.}
  \label{fig:utility_imagenet_lower}
  \end{subfigure}
  \caption{Accuracy under different minimum relative deadlines $D_l$ on CIFAR10 and ImageNet.}
  \label{fig:utility_lower}
\vspace{-0.2cm}
\end{figure}

\subsection{Stage Scheduling for Utility Maximization}
In this subsection, we evaluate the main objective of the RTDeepIoT framework; namely, maximizing cumulative utility of tasks. The following schedulers are compared:

\begin{enumerate}
\item RTDeepIoT: the scheduling algorithm proposed in Section~\ref{sec:DeepCrimson}. In this subsection, the scheduling algorithm uses the exponential increase heuristic for utility prediction, which can achieve the near optimal utility maximization scheduling as discussed in Section~\ref{sec:utility}.
Similarly, we define the quantization step of reward, $\Delta$, to be $0.1$. We will discuss the impact of quantization time step in Section~\ref{sec:hyperparameter}.
\item EDF: the traditional earliest deadline first algorithm. EDF only takes deadline constraints into consideration without considering maximizing overall utility. An attentive reader might remember that our GPU scheduling has limited preemption since individual stage execution cannot be interrupted. In Section~\ref{sec:nn-model}, we explained that accounting for the non-preemptible part (by subtracting one stage execution time from the end-to-end deadline) retains the order of deadlines of different tasks, which is why EDF remains a viable real-time baseline. 
\item LCF: the algorithm is called Least Confidence First, which picks the task with the least confidence. When two tasks have the same confidence, LCF will pick the one having an earlier deadline.
\item RR: a stage-level round-robin scheduling algorithm. The scheduler will select a stage to run among all existing tasks in a round-robin manner. RR implicitly takes the confidence into consideration by picking the task with the least executed stages.
\end{enumerate}

In all the following experiments, we evaluate these backend scheduling algorithms with two metrics, accuracy and deadline miss rate. If a service request cannot finish a single stage of the neural network before its deadline, that service request misses its deadline.
First, we evaluate these scheduling algorithms with an increasing number of concurrent clients $K$. Larger $K$ means more concurrent tasks will exist on average, leading to a intensity test for all backend schedulers. The average classification accuracy and deadline miss rates on CIFAR10 and ImageNet benchmarks are illustrated in Figure~\ref{fig:cifar10_task} and~\ref{fig:imagenet_task} respectively. RDeepIoT clearly outperforms all other schedulers with a large margin. By modeling deep learning tasks as imprecise computation models, RTDeepIoT is able to achieve both high classification accuracy (high service quality) and low deadline miss rate (high service responsiveness) at the same time under workloads with diverse intensity. 
The EDF scheduling algorithm has a suboptimal performance, because it fails to take utility into consideration. The EDF scheduler does not attempt to stop task execution prematurely (because it does not check if a good-enough answer has already been computed). It therefore imposes a higher load on the system, leading to low schedulability. EDF is known to do poorly under overload. By picking the task with the shortest deadline next, it always favors tasks that are more likely to miss their deadline. The LCF and RR schedulers achieve a lower deadline miss rate. However, by having to cut off tasks in a utility-insensitive manner (as they reach deadlines), they still result in inferior accuracy. 
\begin{figure}[!htb]
\begin{subfigure}{1.0\linewidth}
  \centering
  \includegraphics[width=0.7\linewidth]{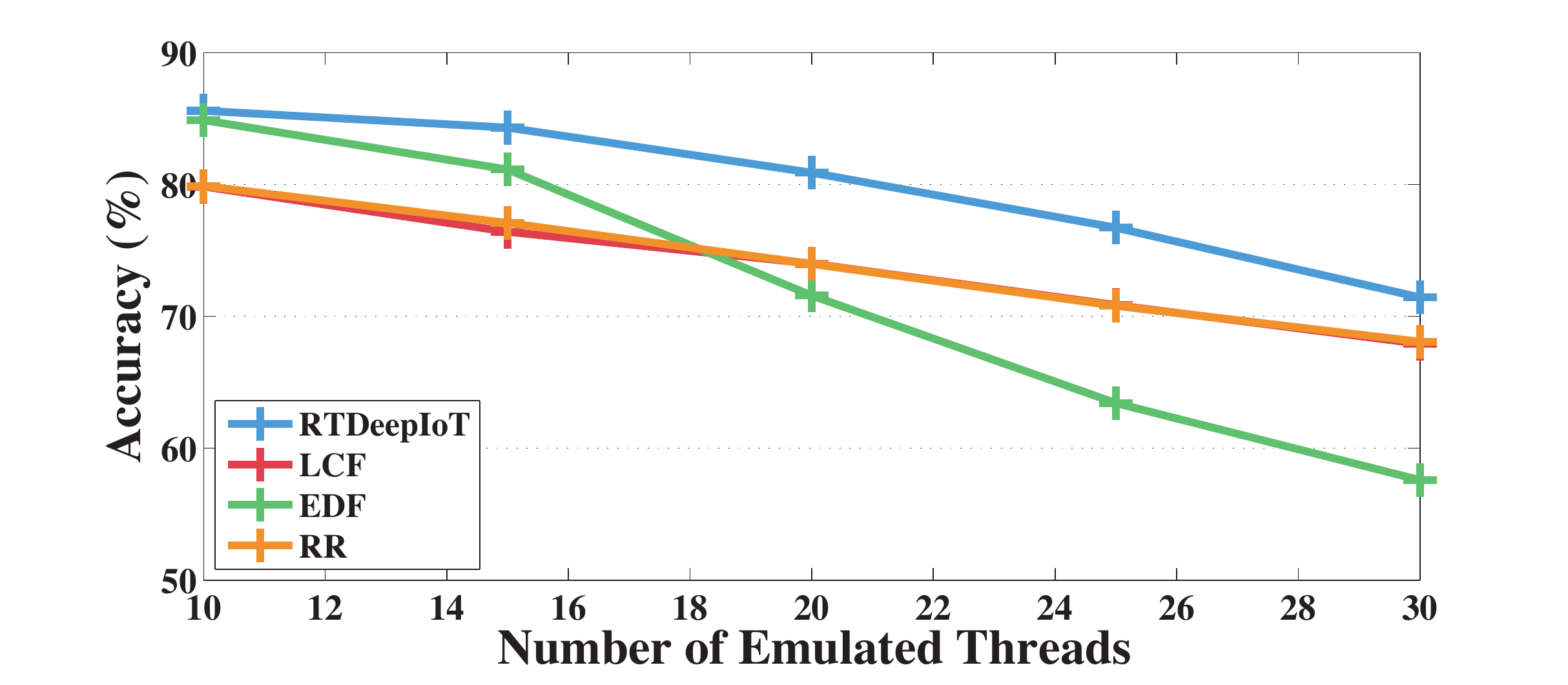}
  \caption{The accuracy with CIFAR10.}
  \label{fig:cifar10_task_acc}
\end{subfigure}
\vspace{-0.2cm}
\begin{subfigure}{1.0\linewidth}
  \centering
  \includegraphics[width=0.7\linewidth]{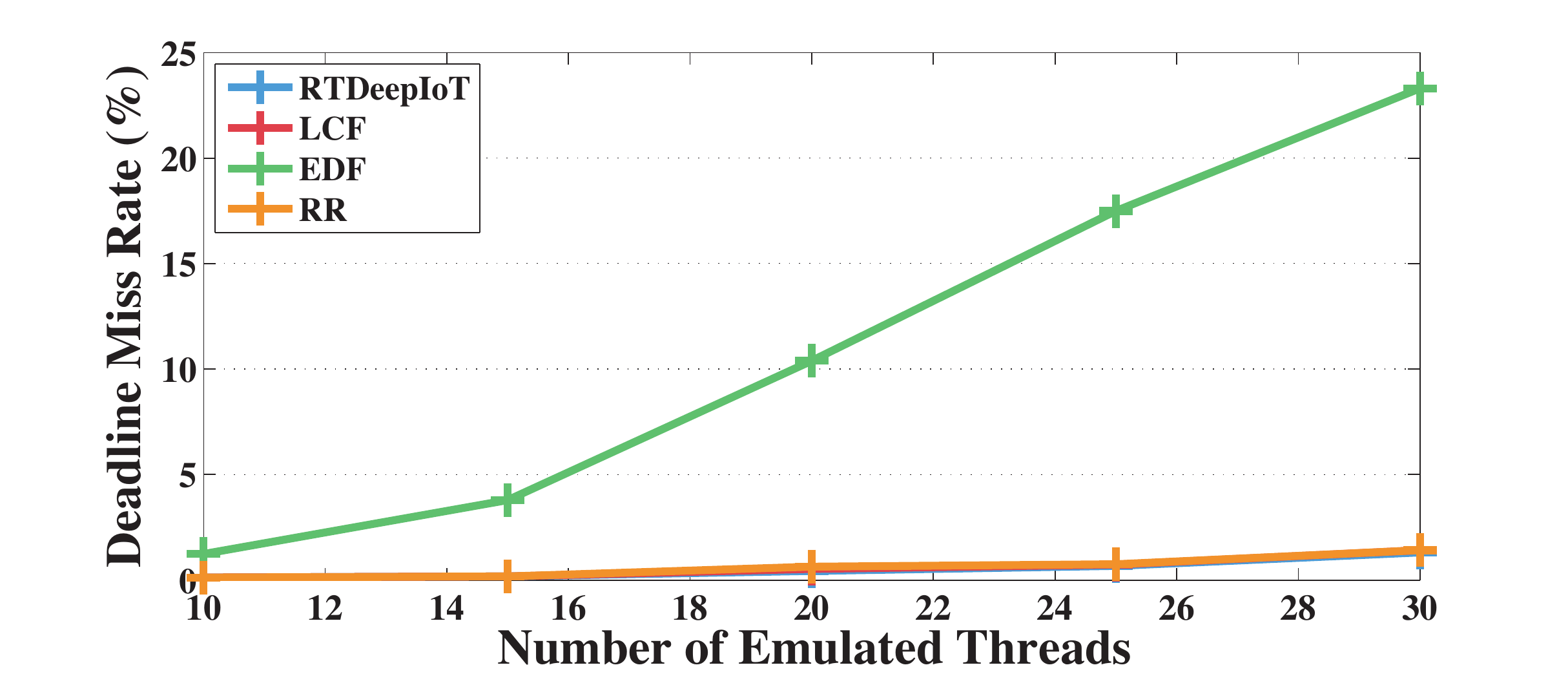}
  \caption{The deadline miss rate with CIFAR10.}
  \label{fig:cifar10_task_deadline}
  \end{subfigure}
  \caption{The performance under $K$ concurrent clients on CIFAR10.}
  \label{fig:cifar10_task}
\vspace{-0.3cm}
\end{figure}

\begin{figure}[!htb]
\vspace{-0.2cm}
\begin{subfigure}{1.0\linewidth}
  \centering
  \includegraphics[width=0.7\linewidth]{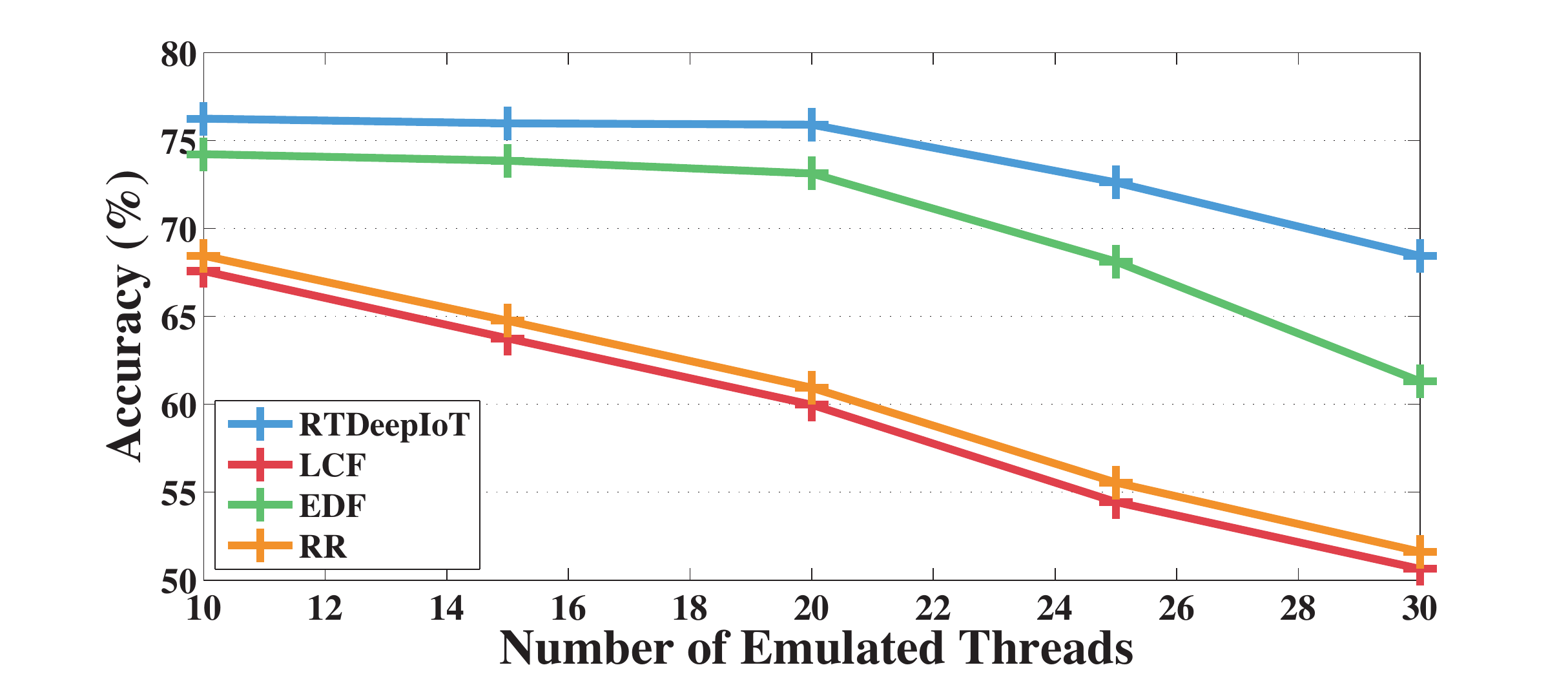}
  \caption{The accuracy with ImageNet.}
  \label{fig:imagenet_task_acc}
\end{subfigure}
\vspace{-0.2cm}
\begin{subfigure}{1.0\linewidth}
  \centering
  \includegraphics[width=0.7\linewidth]{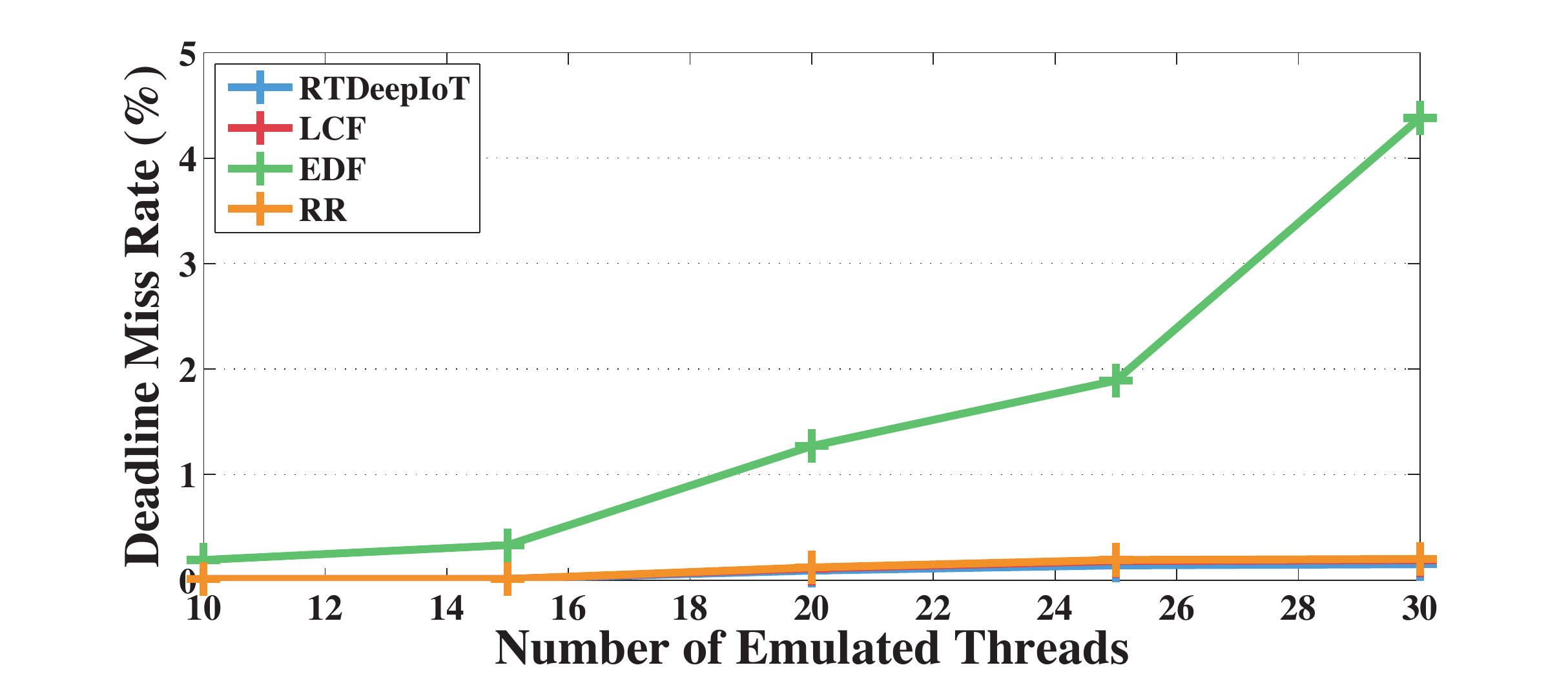}
  \caption{The deadline miss rate with ImageNet.}
  \label{fig:imagenet_task_deadline}
  \end{subfigure}
  \caption{The performance under $K$ concurrent clients on ImageNet.}
  \label{fig:imagenet_task}
\vspace{-0.3cm}
\end{figure}

\begin{figure}[!htb]
\vspace{-0.2cm}
\begin{subfigure}{1.0\linewidth}
  \centering
  \includegraphics[width=0.7\linewidth]{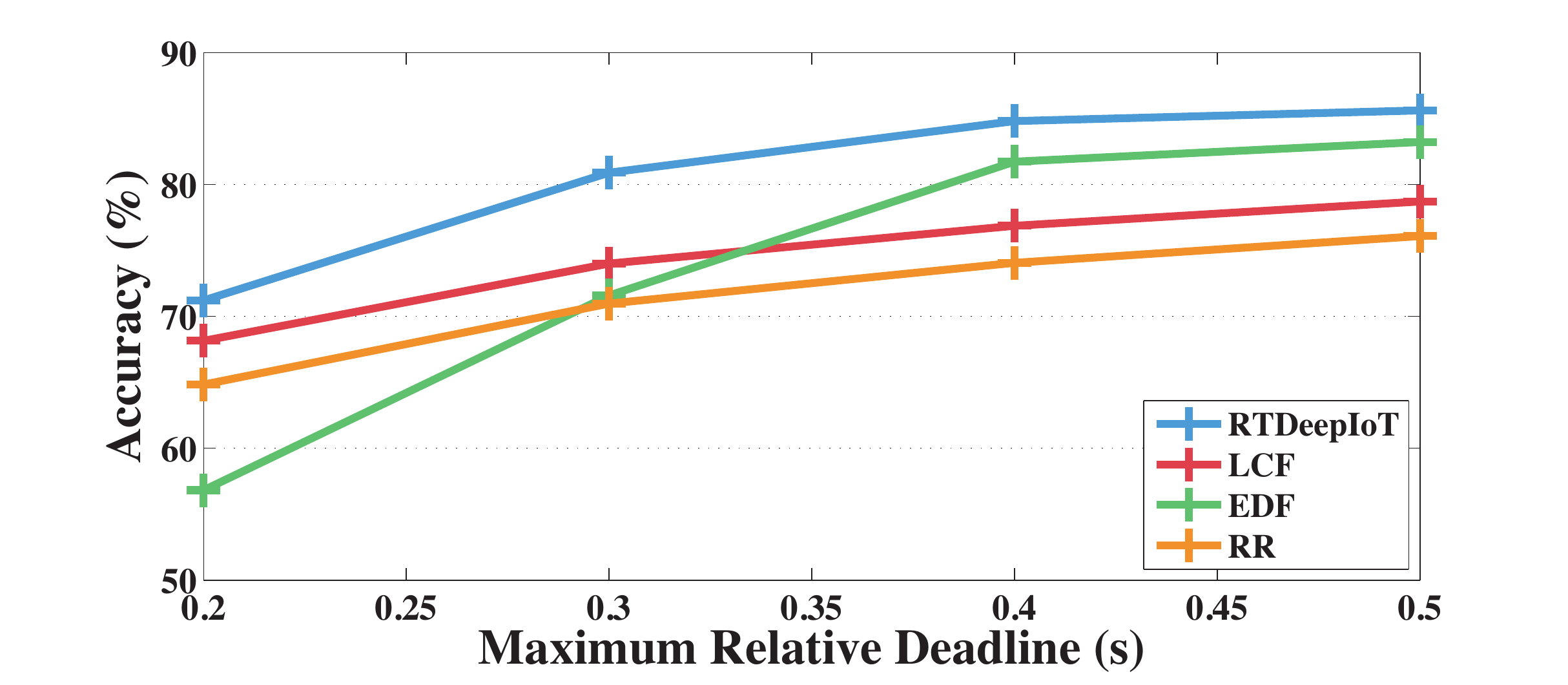}
  \caption{The accuracy with CIFAR10.}
  \label{fig:cifar10_upper_acc}
\end{subfigure}
\vspace{-0.2cm}
\begin{subfigure}{1.0\linewidth}
  \centering
  \includegraphics[width=0.7\linewidth]{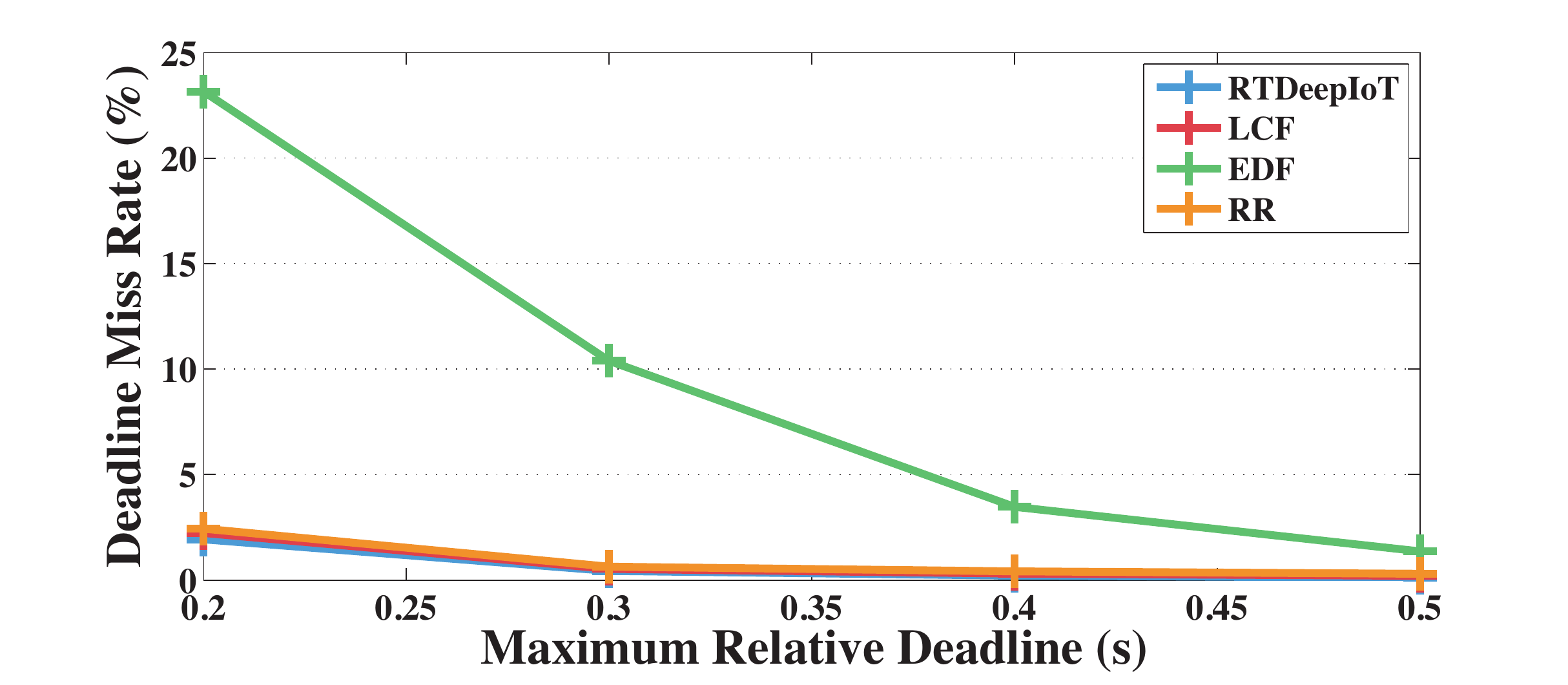}
  \caption{The miss rate with CIFAR10.}
  \label{fig:cifar10_upper_deadline}
  \end{subfigure}
  \caption{The performance under different maximum relative deadlines $D_u$ on CIFAR10.}
  \label{fig:cifar10_upper}
\vspace{-0.1cm}
\end{figure}

\begin{figure}[!htb]
\vspace{-0.3cm}
\begin{subfigure}{1.0\linewidth}
  \centering
  \includegraphics[width=0.7\linewidth]{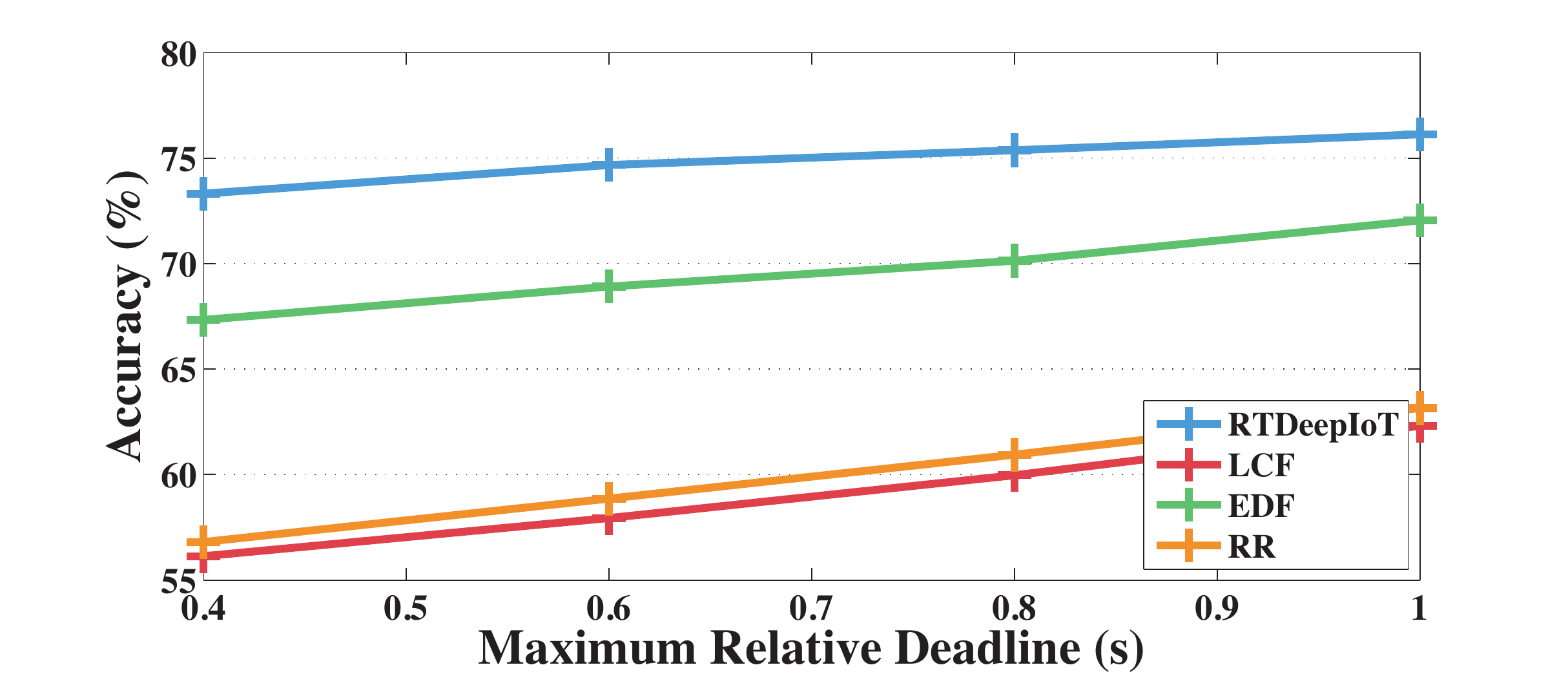}
  \caption{The accuracy with ImageNet.}
  \label{fig:imagenet_upper_acc}
\end{subfigure}
\vspace{-0.2cm}
\begin{subfigure}{1.0\linewidth}
  \centering
  \includegraphics[width=0.7\linewidth]{max_imagenet_UB_acc-eps-converted-to.pdf}
  \caption{The deadline miss rate with ImageNet.}
  \label{fig:imagenet_upper_deadline}
  \end{subfigure}
  \caption{The performance under different maximum relative deadlines $D_u$ on ImageNet.}
  \label{fig:imagenet_upper}
\vspace{-0.3cm}
\end{figure}

\begin{figure}[!htb]
\vspace{-0.3cm}
\begin{subfigure}{1.0\linewidth}
  \centering
  \includegraphics[width=0.7\linewidth]{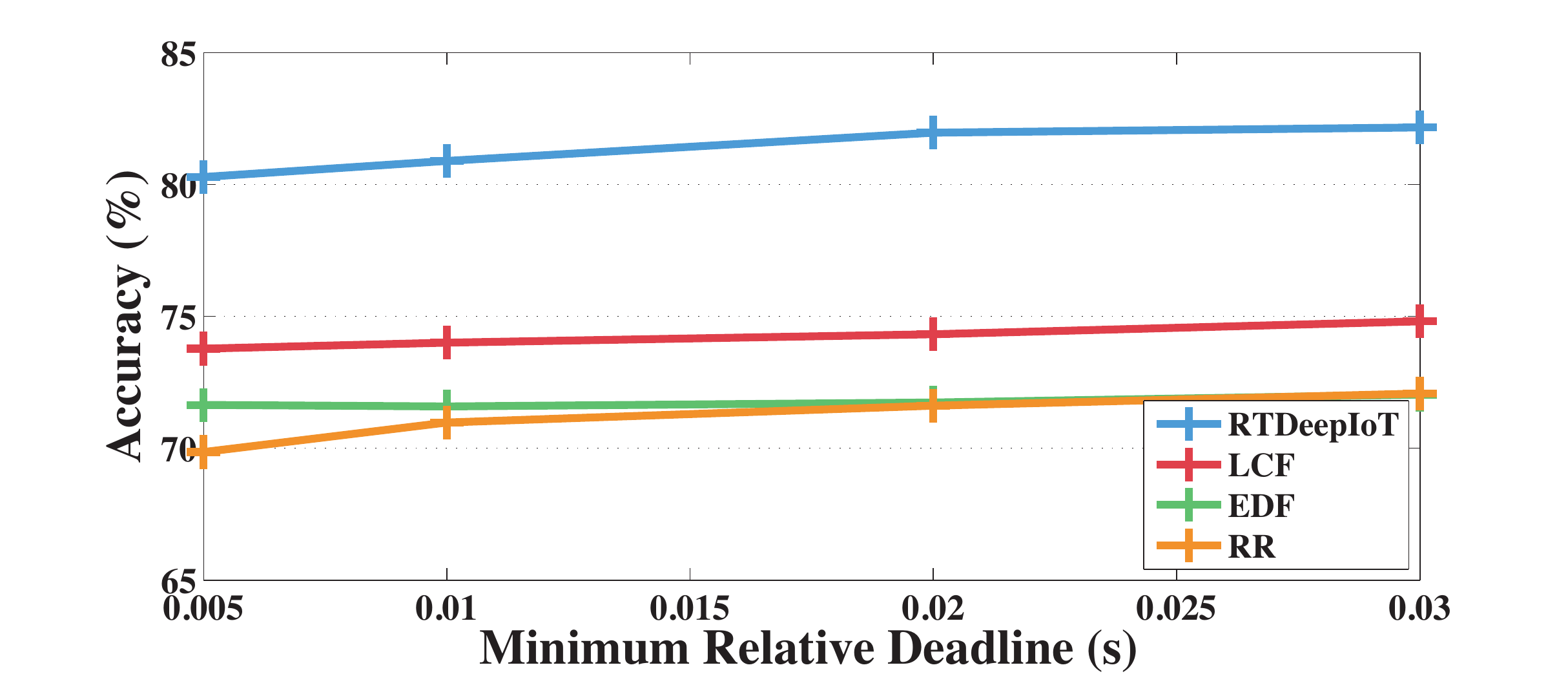}
  \caption{The accuracy with CIFAR10.}
  \label{fig:cifar10_lower_acc}
\end{subfigure}
\vspace{-0.2cm}
\begin{subfigure}{1.0\linewidth}
  \centering
  \includegraphics[width=0.7\linewidth]{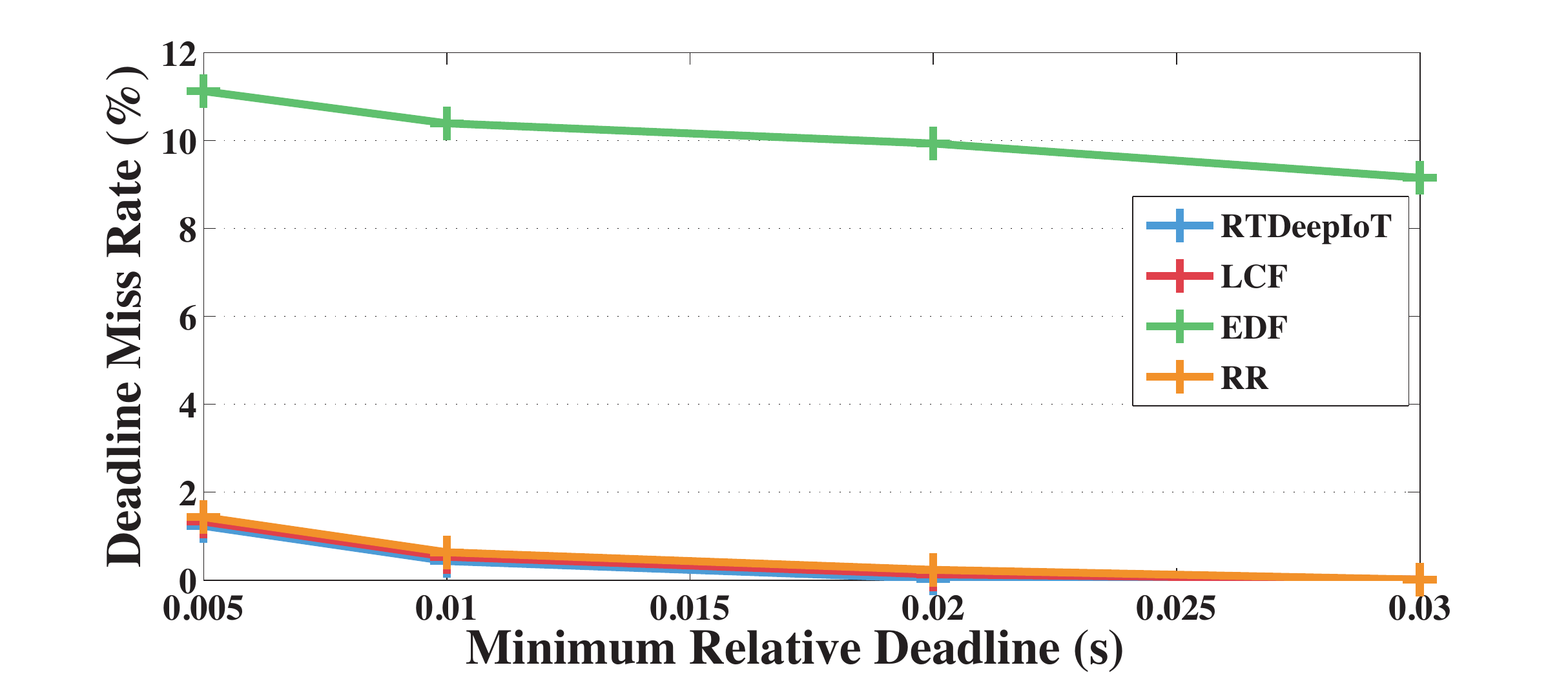}
  \caption{The deadline miss rate with CIFAR10.}
  \label{fig:cifar10_lower_deadline}
  \end{subfigure}
  \caption{The performance under different minimum relative deadlines $D_l$ on CIFAR10.}
  \label{fig:cifar10_lower}
\vspace{-0.3cm}
\end{figure}

\begin{figure}[!htb]
\vspace{-0.3cm}
\begin{subfigure}{1.0\linewidth}
  \centering
  \includegraphics[width=0.7\linewidth]{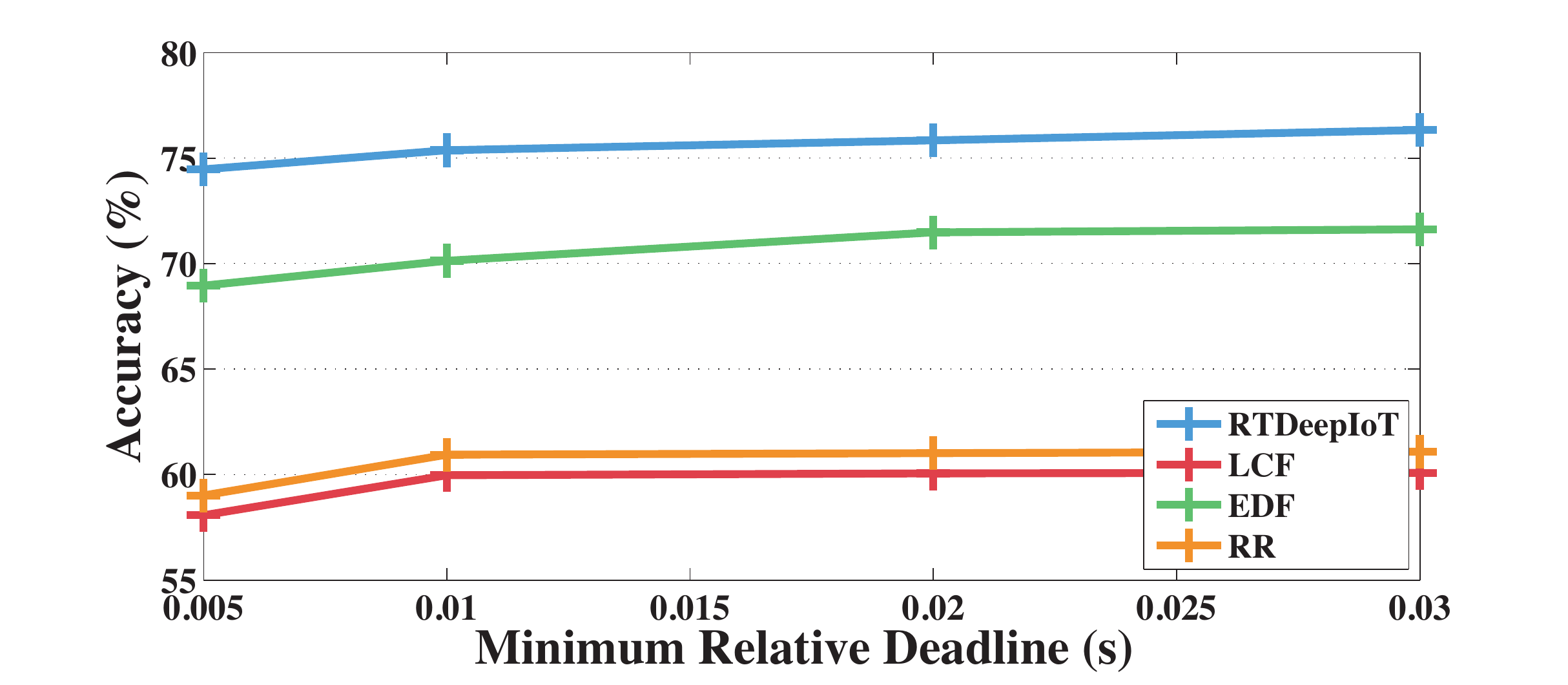}
  \caption{The accuracy with ImageNet.}
  \label{fig:imagenet_lower_acc}
\end{subfigure}
\vspace{-0.2cm}
\begin{subfigure}{1.0\linewidth}
  \centering
  \includegraphics[width=0.7\linewidth]{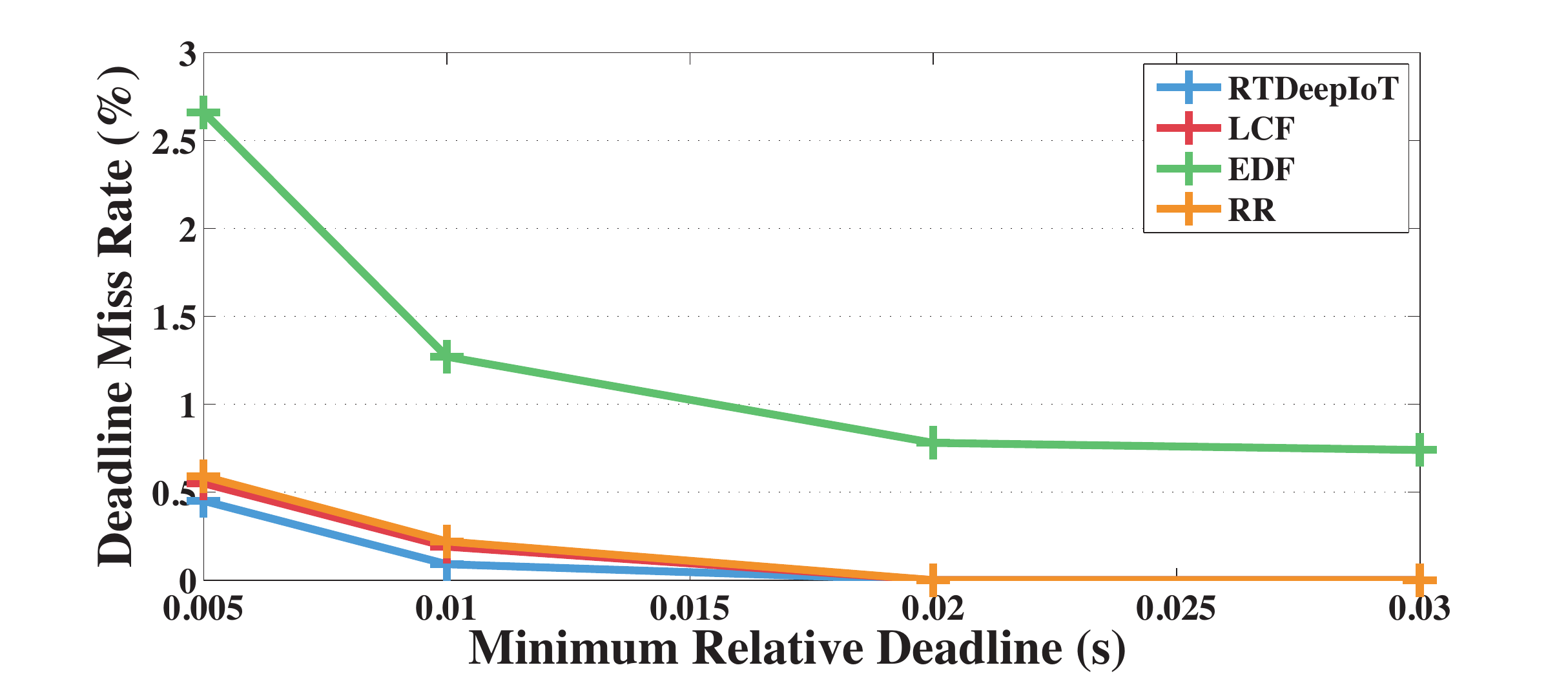}
  \caption{The deadline miss rate with ImageNet.}
  \label{fig:imagenet_lower_deadline}
  \end{subfigure}
  \caption{The performance under different minimum relative deadlines $D_l$ on ImageNet.}
  \label{fig:imagenet_lower}
\vspace{-0.1cm}
\end{figure}
All schedulers fails to provide good performance given an intensive workload. Note that, for our scheduler, deadline misses refer to images where no neural network stage got to execute. One can interpret them as images dropped by admission control (in that our utility-maximizing scheduler decided not to execute them).

\begin{figure}[!htb]
\begin{subfigure}{1.0\linewidth}
  \centering
  \includegraphics[width=0.7\linewidth]{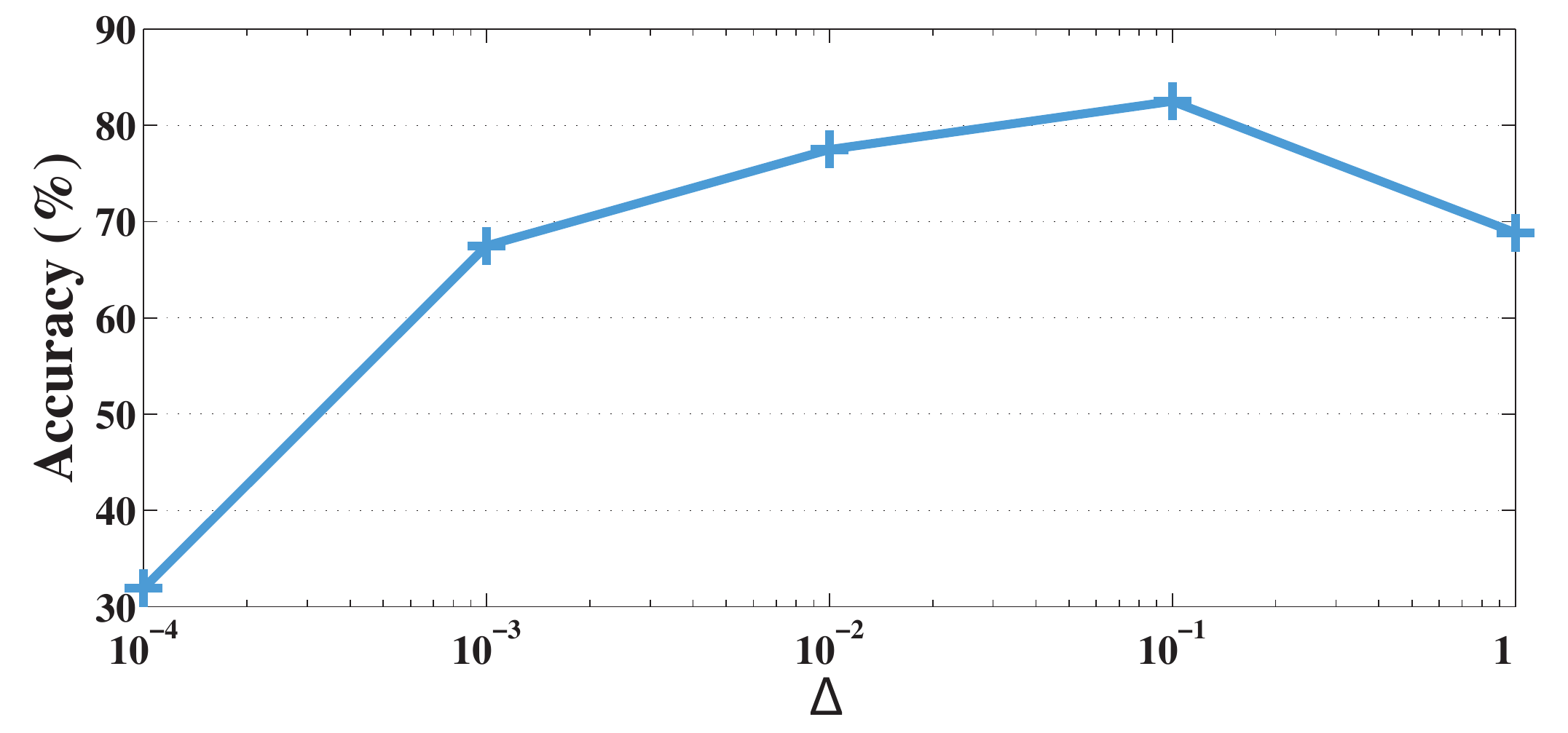}
  \caption{The accuracy on CIFAR10.}
  \label{fig:hyper_cifar10_acc}
\end{subfigure}
\vspace{-0.2cm}
\begin{subfigure}{1.0\linewidth}
  \centering
  \includegraphics[width=0.7\linewidth]{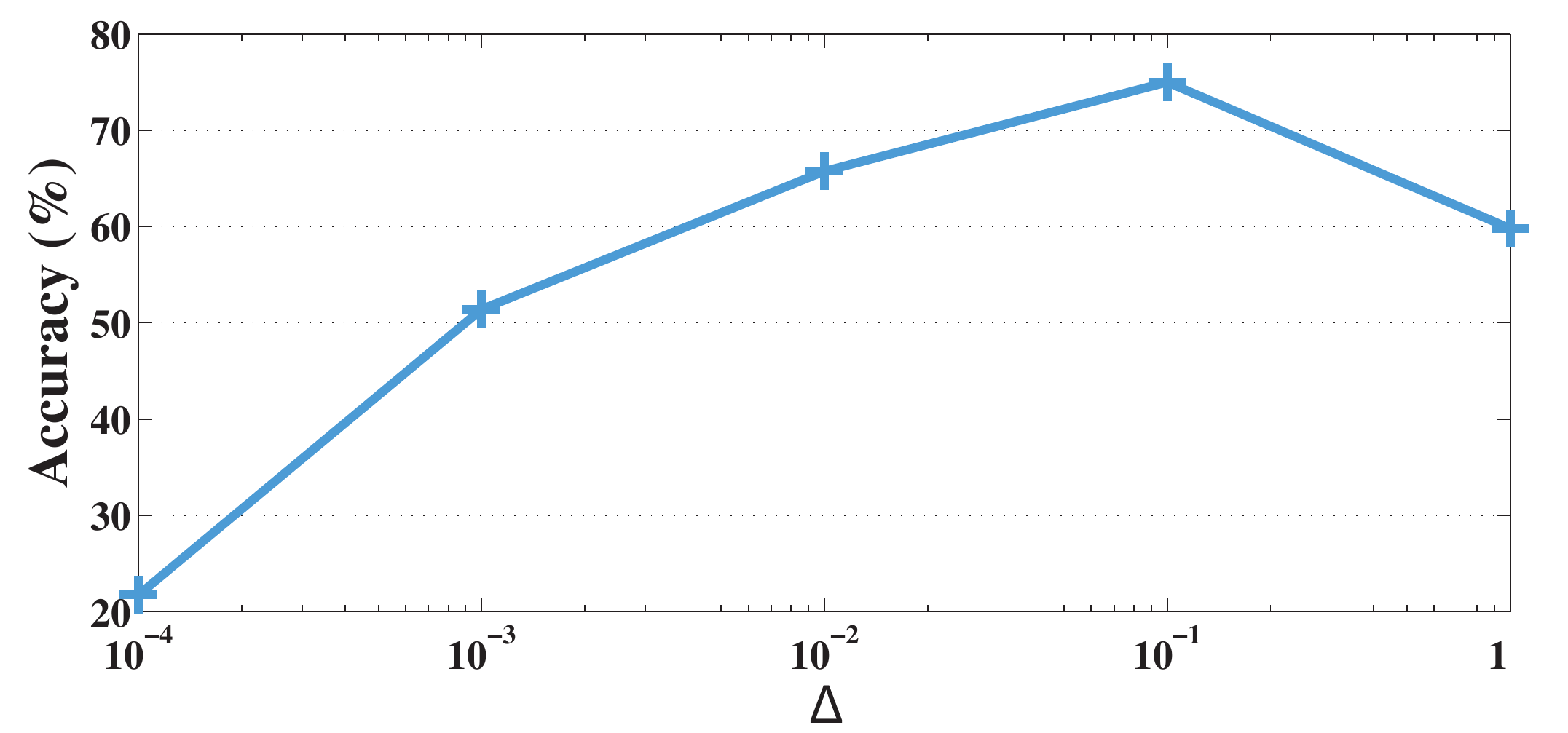}
  \caption{The accuracy on ImageNet.}
  \label{fig:hyper_imagenet_acc}
  \end{subfigure}
  \caption{The performance with a quantization reward step, $\Delta$.}
  \label{fig:hyper_acc}
\vspace{-0.3cm}
\end{figure}

\begin{figure}[!htb]
\vspace{-0.3cm}
\begin{subfigure}{1.0\linewidth}
  \centering
  \includegraphics[width=0.75\linewidth]{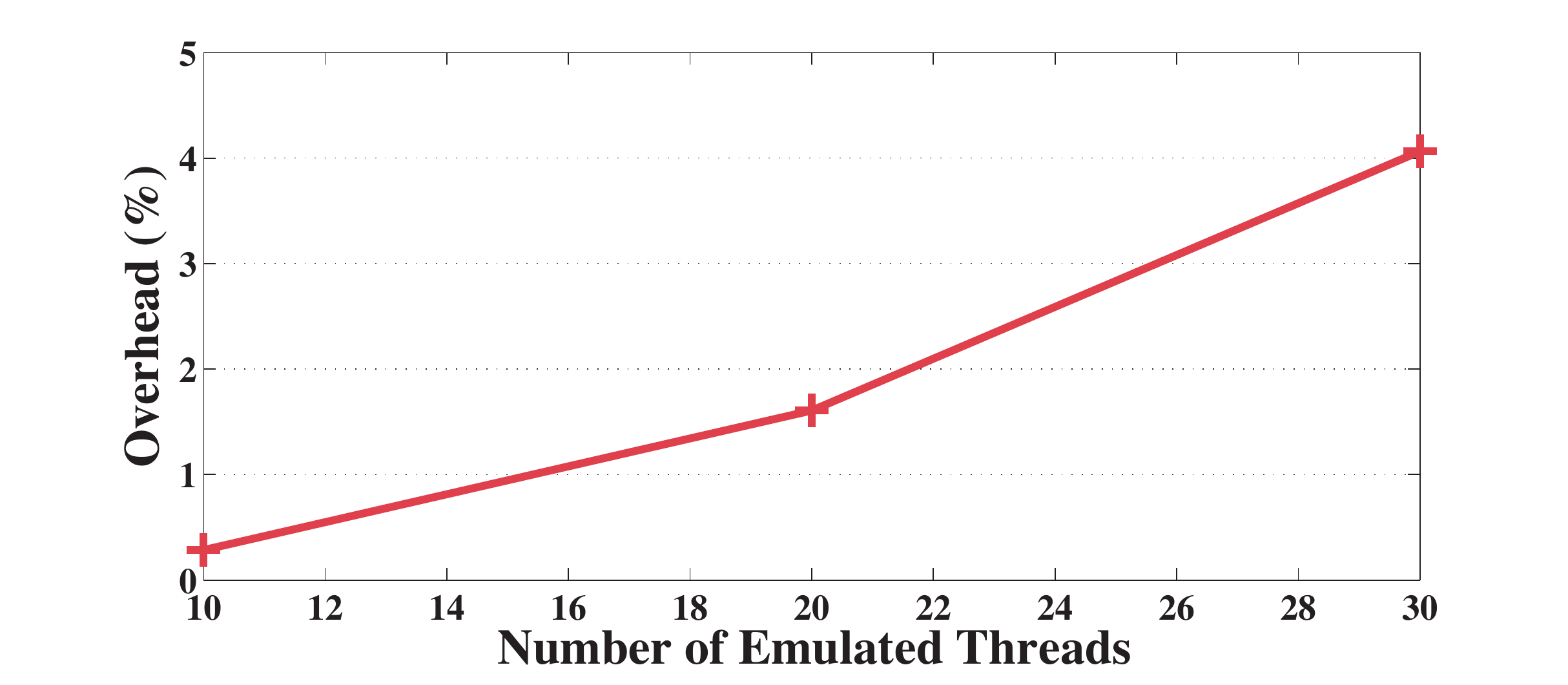}
  \caption{The overhead on CIFAR10.}
  \label{fig:overhead_cifar10_task}
\end{subfigure}
\vspace{-0.2cm}
\begin{subfigure}{1.0\linewidth}
  \centering
  \includegraphics[width=0.75\linewidth]{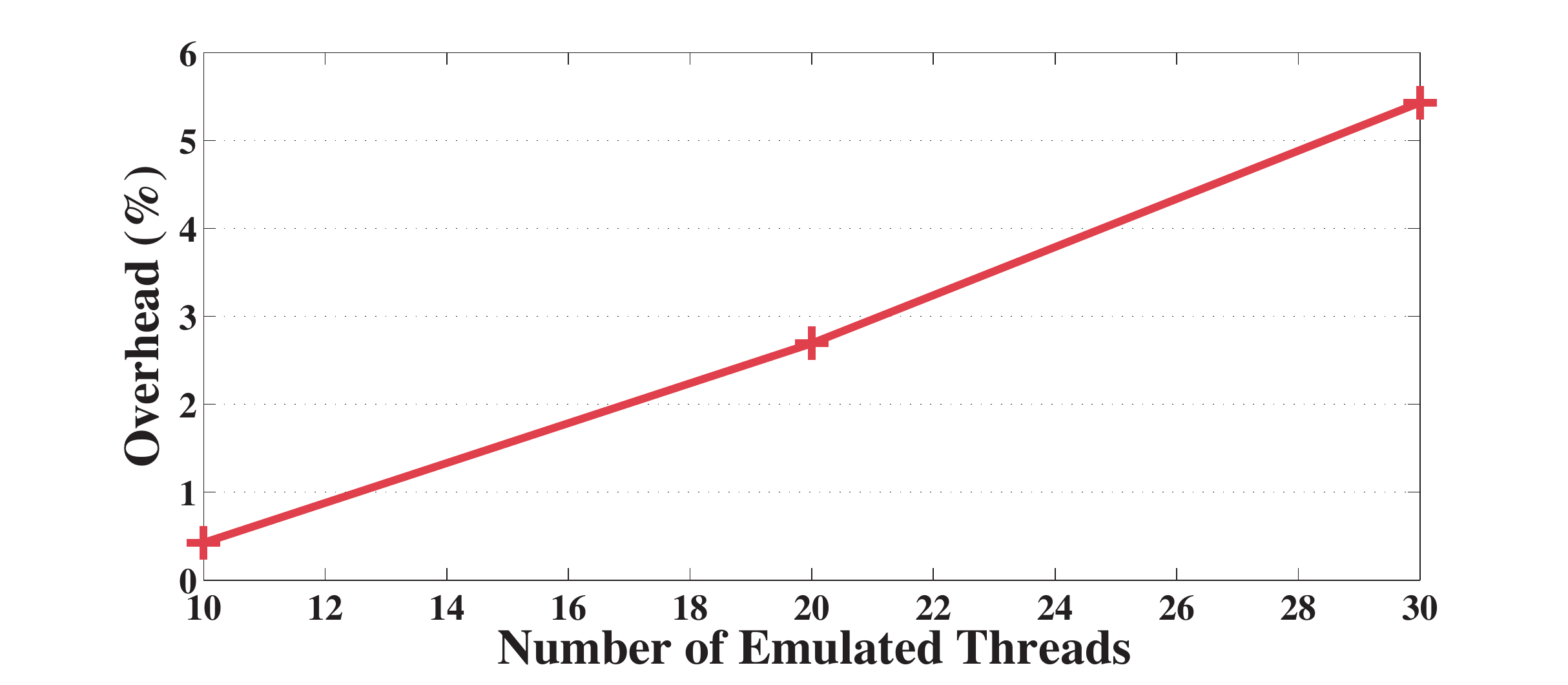}
  \caption{The overhead on ImageNet.}
  \label{fig:overhead_imagenet_task}
  \end{subfigure}
  \caption{The overhead with different service request threads $K$ on CIFAR10 and ImageNet.}
  \label{fig:overhead_task}
 \vspace{-0.1cm}
\end{figure}

Lastly, we evaluate backend schedulers by changing the maximum and minimum relative deadlines. Reducing the maximum and minimum relative deadlines causes tighter timing constraints for scheduling. Results are shown in Figure~\ref{fig:cifar10_upper} -~\ref{fig:imagenet_lower}. RTDeepIoT remains the best scheduler that achieves both high accuracy and low deadline miss rate. As shown in Figure~\ref{fig:cifar10_upper}, tighter timing constraints can have a larger impact on EDF that schedules entire tasks, not individual stages.

\subsection{Hyper-parameter Tuning}~\label{sec:hyperparameter}
In our scheduling algorithm, there exists a hyper-parameter $\Delta$ that controls the reward quantization step. 
In this subsection, we will investigate the impact of hyper-parameter $\Delta$ on final classification accuracy. We set all parameters concerning about workload patterns to be default values.

Evaluation results on CIFAR10 and ImageNet are illustrated Figure~\ref{fig:hyper_acc}. Hyper-parameter $\Delta$ has similar behaviors on both datasets. Theoretically speaking, smaller reward step $\Delta$ leads to a fine-grained scheduling policy with better cumulative utility (\ie high accuracy). However, smaller $\Delta$ also leads to higher time complexity of the scheduling algorithm, which can use up the time and resource for neural network execution, leading to worst cumulative utility (\ie low accuracy) and high deadline missing rate. Therefore, there exists a tradeoff in deciding hyper-parameter $\Delta$. From these experiments, we can see that $\Delta=0.1$ is a good choice for $\Delta$ in practice.

\subsection{System Overhead}
In this subsection, we evaluate the overhead of our scheduling framework. Since the framework contains a dynamic programming algorithm and a fast-updating greedy heuristics. We need to make sure these two modules do not impose significant overhead. 
The overhead is estimated as the averaged percentage of total time consumed by each service request except for the neural network execution time.
Similarly, we evaluate the overhead under diverse workload patterns. We adopt the same default value settings as mentioned in Section~\ref{sec:utility}.

As shown in Figure~\ref{fig:overhead_task}, we measure the overhead of scheduling by changing the number of service request threads $K$. The overhead of scheduling algorithm is between $0.5\%$ and $6\%$ in all these types of workloads. The reader is reminder that this overhead is for an un-optimized user-space implementation. Some reductions may be possible, by moving the scheduler to the kernel.  

\vspace{-0.2cm}
\section{Related Work}~\label{sec:Related}
\vspace{-0.7cm}

Our paper develops a real-time scheduler for a service motivated by machine intelligence needs of IoT applications.
A body of scheduling algorithms that comes close to ours are those that support approximate computing. 
A prime example of approximate computing in the real time research community is the literature on imprecise computations. The work trades off result quality versus computation time~\cite{Chung1987BS,Liu1991AlgorithmsFS,Chen2009ImpreciseCW,Amirijoo2006SpecificationAM,Feng1993AnEI}. Our work resembles imprecise computations in that we use intermediate results from a prematurely terminated real-time process. The work assumes processes to be monotone, and propose an indicator for the quality of the imprecise results. More recently~\cite{Chen2009ImpreciseCW}, imprecise computation models were proposed where the scheduler decides on the execution of an optional section of processes by taking deadlines and required QoS into consideration. Our work extends this concept to a novel application domain.

The QoS optimization and management have also been heavily addressed in real-time literature. Rajkumar et al. presented an analytical model for QoS management in systems with multiple constraints~\cite{rajkumar1997resource,rajkumar1998practical}. Lee et al. extended the QoS management analysis with the discrete and non-concave utility functions~\cite{lee1999quality}. Abdelzaher et al. proposed a real-time QoS negotiation model for maximizing system utility with guaranteed performance~\cite{atdelzater2000qos}. Curescu et al. presented a QoS optimization scheme for mobile networks~\cite{curescu2003time}. Koliver et al. designed a fuzzy-control approach for QoS adaptation~\cite{koliver2002specification}. In this paper, we adapt the utility optimization framework specifically to the execution of deep neural network tasks.
Moreover, our scheduler has been integrated with Tensor Flow; a library for deep learning systems~\cite{abadi2016tensorflow}. This makes it the first real-time scheduler implementing imprecise computations and utility maximization in the context of a mainstream deep learning software framework.

\vspace{-0.2cm}
\section{Conclusion}
\label{sec:Conclusion}
\vspace{-0.2cm}

This paper presented a novel scheduler, based on imprecise computations, suitable for edge services that simple devices (with sensing capabilities) offload their ``machine intelligence" to. We focused on deep learning as the state of the art enabler of machine intelligence. 
We show that the resulting schedules improve the average quality of results by allocating computing resources where they offer the best improvement in accuracy. The service is currently being extended to other deep learning libraries (besides machine vision) to offer rich support for deep intelligence as a (real-time) service.

\bibliographystyle{IEEEtran}
\bibliography{reference}

\balance

\end{document}